\documentclass[accepted]{uai2026} 
                        
\usepackage[american]{babel}

\usepackage{natbib} 
\usepackage{mathtools} 
\usepackage{booktabs} 
\usepackage{tikz} 

\usepackage{amsmath}
\usepackage{amssymb}
\usepackage{mathtools}
\usepackage{amsthm}
\usepackage{algorithm}
\usepackage{algorithmic}

\usepackage{multirow}
\usepackage{booktabs}
\theoremstyle{plain}
\newtheorem{theorem}{Theorem}[section]

\newtheorem{lemma}[theorem]{Lemma}

\theoremstyle{definition}

\newtheorem{assumption}[theorem]{Assumption}
\theoremstyle{remark}

\usepackage{amsmath, amssymb, amsthm} 

\title{Model Merging as Probabilistic Inference in Fine-Tuning Parameter Space}

\author[1]{\href{mailto:<trongminhlong.bui@wsu.edu>}{Long Minh Bui}{}}
\author[2]{Tuan Anh Le Van}
\author[2]{Tung Phi Duc}
\author[2]{Phi Le Nguyen}
\author[1]{Jana Doppa}
\author[1]{Trong Nghia Hoang}
\affil[1]{%
    Washington State University
}
\affil[2]{%
    Hanoi University of Science and Technology
}

\begin{document}
\maketitle

\begin{abstract}
\vspace{-6mm}
Model merging aims to combine existing single-task solutions into a multi-task solution without additional data-driven fine-tuning.~Most existing approaches achieve this using geometric properties of local solution spaces.~However, such geometric views provide limited guidance for scoring how statistically useful each task-specific update direction is across tasks during merging.~To address this, we formulate model merging from a new perspective of probabilistic inference under a product-of-experts (PoE) scenario where each single-task solution defines an energy-based expert model (EBM) over the merged parameters.~We show that several existing model merging methods arise as special cases of our framework under energy designs that impose implicit Gaussian assumptions on directional residuals between merged and task-specific models.~Empirically, we find that these residuals are often heavy-tailed which exposes a mismatch with the imposed light-tailed Gaussian structures.~We address this with a heavy-tailed PoE design based on Cauchy experts, which better captures the observed residual behavior while admitting a provably convergent inference procedure.~Experiments across multiple tasks and architectures show significant improvements over state-of-the-arts baselines.~Our code is available at \url{https://github.com/MinhLong210/PoE-EBM-Merging.git}.\vspace{-3mm}
\end{abstract}

\section{Introduction}
\label{sec:intro}
Large pre-trained foundation models and their task-specific fine-tuned variants~\citep{achiam2023gpt, touvron2023llama} have become increasingly available for a wide range of downstream tasks.~The growing availability of such specialized models has motivated model merging, which seeks to combine multiple task-specific models into a single multi-task model without additional data-driven fine-tuning~\citep{hoang2019collective,NghiaNeurIPS19,NghiaICML20,NghiaICML21,yang2024model, li2023deep,hoang2024fewshot}.~For example, monolingual models can be combined to obtain a single multilingual model~\citep{ahmadian2024mix}.~Such model merging approaches are particularly valuable in many real-world production systems~\citep{Su2018ARS} where both local datasets and training pipelines cannot be centralized and synchronized~\footnote{Federated Learning~\citep{McMahan2016CommunicationEfficientLO} can help address data privacy but still requires synchronized local training processes.}.~Despite requiring no additional training data, model merging can often achieve performance competitive with full multi-task fine-tuning, which might be impractical in such settings.~Model merging is also attractive when storage resources are limited, such as on edge devices~\citep{voghoei2018deep, narayanswamy2024scaling}, or access to privacy-sensitive task-specific data is restricted~\citep{liang2025vision, zhang2024challenges, pan2024federated}.


{\bf Prior Work.} A common paradigm in model merging is to represent each task-specific model with a fine-tuning module, such as a low-rank adaptation (LoRA) matrix~\citep{hu2022lora} or a task vector capturing the difference between fine-tuned and pre-trained parameters~\citep{ilharco2022editing}. Model merging then reduces to aggregating these task-specific updates into a single multi-task model. Most existing approaches perform this aggregation using geometric properties of local solution spaces.

The simplest methods, including weight averaging and task arithmetic~\citep{wortsman2022robust,ilharco2022editing}, assume that task-specific updates lie in a common solution manifold and can be directly combined.~More sophisticated approaches, such as Fisher-weighted averaging~\citep{matena2022merging} and Gram-based weighting~\citep{jin2022dataless}, incorporate curvature or data-dependent geometric information to better account for differences among local solution spaces during aggregation. Another line of work explicitly seeks to align local solution spaces before merging. For example, DOGE~\citep{wei2025modeling} constructs a shared tangent space, while KnOTS~\citep{stoica2024model} derives a common low-dimensional subspace using singular value decomposition. Task-specific models are then projected onto these aligned representations prior to aggregation.

\begin{figure*}[t]
    \centering
    \includegraphics[width=\linewidth]{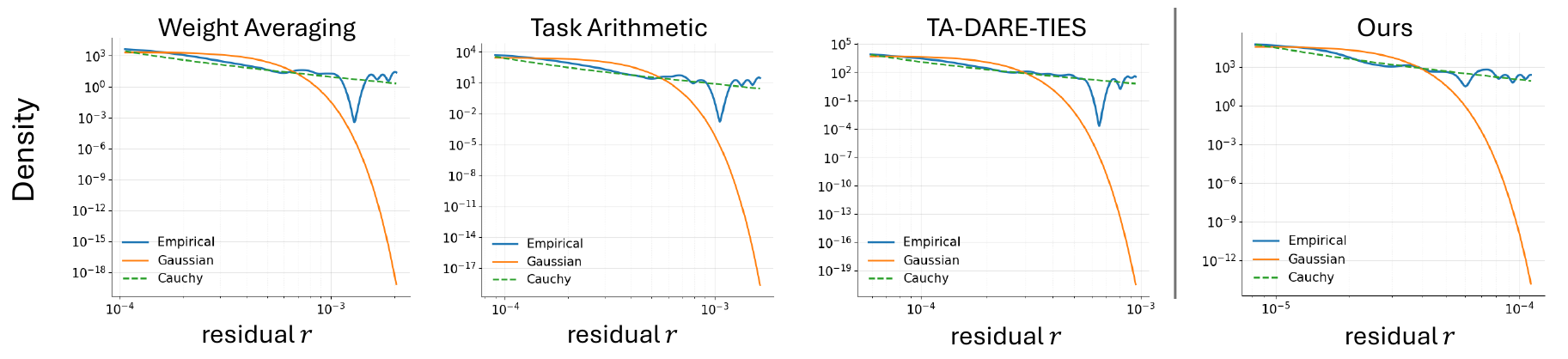}
    \caption{Empirical distributions of the directional residual \(r=(\boldsymbol{\zeta}-\boldsymbol{\theta})^\top\boldsymbol{\theta}\), which measures the drift of a merged update \(\boldsymbol{\zeta}\) from a task update \(\boldsymbol{\theta}\) along that task's own update direction.~We compare residuals produced by different merging methods when merging 7 fine-tuned ViT-L/14 models.~Log-scale density plots show that empirical tails decay substantially more slowly than the fitted Gaussian and align more closely with a fitted Cauchy distribution, indicating pronounced heavy-tailed residual behavior in model merging, revealing a mismatch with the (implicit) Gaussian expert structure in prior work.
}\vspace{-4mm}
    \label{fig:heavy_tail}
\end{figure*}

{\bf Limitation.}~These approaches largely treat each task-specific fine-tuning module as a deterministic point estimate and aggregate them with some geometry-guided operations.~However, this deterministic view, provides little information regarding how useful each update direction is across different tasks.~As a result, update directions that are effective only for individual tasks can be aggregated with directions that are consistently useful across tasks.~Such task-specific directions can then cancel or dominate useful cross-task directions, pulling the merged update away from shared directions that benefit multiple tasks.~This suggests that model merging should not only aggregate update directions, but also estimate how confidently each direction should influence the merged module.



{\bf Motivation and Solution Vision.}~Motivated by the above intuition, we investigate model merging within a broader probabilistic framework in which each task update induces a distribution over candidate shared update directions rather than a single point estimate.~The local probabilistic score assigned for each candidate direction thus reflects how it is supported by the corresponding task.~In this view, merging becomes evidence aggregation.~Directions supported by multiple tasks receive higher aggregate confidence while directions supported only by individual tasks receive low support from the remaining tasks and are down-weighted.

We instantiate this idea by casting model merging as MAP inference in the fine-tuning parameter space under a product of task-specific energy-based experts.~This view recovers existing merging rules as special cases under particular energy designs and exposes the uncertainty assumptions they implicitly impose.~In particular, quadratic energies recover classical averaging-style methods and correspond to Gaussian experts.~This approach is also closely related to a recent uncertainty-aware gradient matching method of~\citet{daheim2023model} which imposes an implicit Gaussian expert design via its Laplace approximation interpretation.~While this method provides important empirical evidence for the effectiveness of probabilistic model merging, our analysis reveals that its implicit Gaussian structure does not fit well with the distribution over directional residuals between individual and merged updates.~In particular, we show that these directional residuals exhibit substantial heavy-tailed behavior while Gaussian models are inherently light-tailed (see Fig.~\ref{fig:heavy_tail}).~To address this limitation, we develop a product-of-experts (PoE) formulation with heavy-tailed energy-based expert models (EBM).~Our solution approach is substantiated by the following technical contributions:

\textbf{1.~A unified probabilistic model merging framework.}~We formulate model merging as MAP inference in the fine-tuning parameter space under a product of task-specific energy-based experts.~In this view, each fine-tuning module induces an energy over the directional residual between a candidate merged update and the task-specific update.~The merged module then corresponds to the MAP estimate under the resulting product of experts.~We show that existing merging rules, including uniform averaging and Fisher-weighted averaging, arise as special cases under particular energy designs.~This provides a unified lens for these methods and exposes the implicit probabilistic assumptions imposed by their aggregation rules (Section~\ref{sec: method} and~\ref{sec:design}).

\textbf{2.~Heavy-tailed merging with convergence guarantees.}
Under the above probabilistic framework, we show that existing merging methods correspond to light-tailed distributions over the directional residual
\(r = (\boldsymbol{\zeta}-\boldsymbol{\theta})^\top \boldsymbol{\theta}\),
which measures how far a candidate merged update \(\boldsymbol{\zeta}\) drifts from task update \(\boldsymbol{\theta}\) along that task's own update direction.~Our empirical analysis shows that these residuals exhibit heavy-tailed behavior, revealing a mismatch with the Gaussian expert structure often implicitly assumed in prior work.~To address this, we develop a novel heavy-tailed Cauchy-based expert designs and an efficient fixed-point MAP inference algorithm with convergence guarantees (Section~\ref{sec: merging algo}).

\textbf{3.~Evaluation across vision and language models.}~We evaluate the proposed framework across diverse vision and language benchmarks spanning multiple model families.~Our approach consistently improves merged model performance over state-of-the-art (SOTA) baselines (Section~\ref{sec: Experiments}).

\section{Problem Setup and Background} \label{sec: background}

\label{sec: background MM}
Model merging aims to aggregate $N$ existing models which are fine-tuned from a large pre-trained model $\boldsymbol{W}_0$ on different downstream datasets. Each fine-tuned task-specific model has parameters $\boldsymbol{W}_i$ which are learned from the local dataset $D_i$.~To extract task-specific information for task $i \in [N]$,~\citet{ilharco2022editing} introduced the task vector $\boldsymbol{\theta}_i = \boldsymbol{W}_i-\boldsymbol{W}_0$.~The task vector encodes task-specific information and allows the analysis of individual task's characteristics.~The goal of model merging is to find merged parameters $\boldsymbol{W}_m$ that performs well on all tasks by designing an aggregation algorithm $A$ to combine the task vectors:
\begin{eqnarray*}
    \boldsymbol{W}_m &=& A\big(\boldsymbol{W}_0, \boldsymbol{\theta}_1,\dots,\boldsymbol{\theta}_N\big) \ .
\end{eqnarray*}
\cite{ilharco2022editing} show that we can obtain multi-task models $\boldsymbol{W}_m$ by performing simple task arithmetic operations on the task vectors $\theta_1$, $\theta_2$, \ldots, $\theta_N$:
\begin{eqnarray} \label{eq: task arith}
    \boldsymbol{W}_m &=& \boldsymbol{W}_0 \ +\ \lambda\sum_{i=1}^N \boldsymbol{\theta}_i \ ,
\end{eqnarray}
where $\lambda$ is a scaling coefficient, usually tuned on a validation set.~When $\lambda=1/N$, Eq.~\eqref{eq: task arith} reduces to performing weight averaging over $\boldsymbol{W}_i$~\citep{wortsman2022robust}.~\citet{matena2022merging} further generalize this idea by scaling local parameters with their corresponding Fisher Information Matrix $\boldsymbol{F}_i$ which results in the following Fisher Averaging:
\begin{eqnarray} \label{eq: Fisher}
    \boldsymbol{W}_m &=& \sum_{i=1}^N \alpha_i \left(\sum_{t=1}^N \alpha_t \boldsymbol{F}_t\right)^{-1} \boldsymbol{F}_i\boldsymbol{\theta}_i \ ,
\end{eqnarray}
where 
\begin{eqnarray}
    \mathbf{F}_i &=& \mathbb{E}_{x\sim D_i} \Big[\mathbb{E}_{y\sim p_{\mathbf{W}_i}(y|x)}\Big]\mathbf{G}_i\mathbf{G}_i^\top
\end{eqnarray} with $\mathbf{G}_i=\nabla_{\mathbf{W}_i}\log p_{\mathbf{W}_i}(y|x)$.

Alternatively, RegMean~\citep{jin2022dataless} matches model behavior by aligning activations between the merged and task-specific models at each linear layer.~This leads to a linear regression problem defined by the task-specific data matrix \(\boldsymbol{X}_i\) which introduces another merging rule:
\begin{eqnarray}
\hspace{-1mm}\boldsymbol{W}_m
    =
    \left(
    \sum_{i=1}^N
    \frac{1}{N_i}
    \boldsymbol{X}_i^\top \boldsymbol{X}_i
    \right)^{-1}
    \left(
    \sum_{i=1}^N
    \frac{1}{N_i}
    \boldsymbol{X}_i^\top \boldsymbol{X}_i
    \boldsymbol{\theta}_i
    \right).
\end{eqnarray}

Overall, existing merging methods differ in how they align local solution spaces.~Yet, they essentially view task updates as deterministic quantities to be aggregated without weighing their statistical usefulness across different tasks (Section~\ref{sec:intro}).~We therefore seek a probabilistic treatment of fine-tuning modules that not only account for the local update geometries, but also estimate how confidently each direction should influence the merged module.

\section{A Product-of-Experts (PoE) Perspective on Model Merging} \label{sec: method}

In this section, we develop a new perspective for model merging in which task-specific fine-tuning modules can be interpreted as observations of a shared latent parameter that captures information common across tasks.~The merged model can be viewed as a sample drawn from a product-of-experts (PoE) model combining these local energy-based densities (Section~\ref{sec: merging as EBM}).~In this view, model merging can then be formulated as MAP inference which also provides a unified probabilistic interpretation of existing merging rules. In particular, we show that weight averaging, Fisher-weighted averaging, and RegMean arise as special cases under specific choices of task-specific energy functions, thereby making their implicit distributional assumptions explicit (Section~\ref{sec: energy function}).~Section~\ref{sec:design} then shows that these implicit assumptions are mismatched with the empirical deviations observed between candidate merged updates and task-specific modules, and introduces a redesign of the local energy functions to mitigate this mismatch.

\subsection{POE formulation for model merging} \label{sec: merging as EBM}

Energy-based models (EBMs) \citep{teh2003energy, lecun2006tutorial, song2021train} define probability distributions through an unnormalized function called an \emph{energy} $E(\boldsymbol{\zeta})$ assigning lower energy to more likely input $\boldsymbol{\zeta}$. The resulting distribution is given by
\begin{eqnarray}
p(\boldsymbol{\zeta}) &=& Z^{-1} \cdot \exp{\big(-E(\boldsymbol{\zeta})\big)} \ ,
\end{eqnarray}
where $Z \triangleq \int\exp{(-E(\boldsymbol{\zeta}))}d\boldsymbol{\zeta}$ is the partition function, ensuring the distribution integrates to 1. Under this view, inputs with lower energies correspond to higher probability. This formulation only needs to specify compatibility via the energy without explicit normalization.

Now, considering the problem of merging $N$ task-specific models.~We assume that there exists an unknown latent parameter ${\boldsymbol{\zeta}}$ that is shared across tasks.~Each task-specific fine-tuning module can then be viewed as providing noisy evidence about the latent shared parameter $\boldsymbol{\zeta}$. Therefore, we associate each task with an expert distribution over $\boldsymbol{\zeta}$, where higher probability corresponds to greater compatibility between the latent parameter and the task-specific update. We model this compatibility through an energy function, resulting in the following energy-based expert:
\begin{eqnarray}
p_i(\boldsymbol{\zeta}) &\propto& \exp{\Big(-E_i(\boldsymbol{\zeta})\Big)}\ ,
\end{eqnarray}
where $E_i(\boldsymbol{\zeta})$ is an energy function determined by the task-specific parameter $\boldsymbol{\theta}_i$. The energy function is general and can be designed with specific desiderata. Local energy functions can also be combined naturally through multiplication as established in~\citep{hinton2002training}. Essentially, each task defines its energy $E_i(\boldsymbol{\zeta})$ and under conditional independence assumption of tasks on $\boldsymbol{\zeta}$, the PoE distribution is given as 
\begin{eqnarray} \label{eq: PoE posterior}
p(\boldsymbol{\zeta}) &\propto& \prod_{i=1}^N p_i(\boldsymbol{\zeta}) \ \propto \ \exp{\left( - \sum_{i=1}^N E_i(\boldsymbol{\zeta})\right)} \ .
\end{eqnarray}
This results in the global energy
$E(\boldsymbol{\zeta}) \triangleq \sum_{i=1}^N E_i(\boldsymbol{\zeta}),$
such that $p(\boldsymbol{\zeta}) \propto \exp{(-E(\boldsymbol{\zeta}))}$.~Combining task experts then corresponds to summing up their energies.~The resulting PoE distribution thus concentrates probability mass on configurations $\boldsymbol{\zeta}$ that simultaneously achieve low energy for all experts. Under this view, model merging reduces to inference with an energy-based model over the shared parameter. 

A natural way to obtain a merged model is to compute the maximum-a-posteriori (MAP) estimate:
\begin{eqnarray} \label{eq: general objective}
    \boldsymbol{\zeta}^* &=& \arg\max_{\boldsymbol{\zeta}} p(\boldsymbol{\zeta}) \ = \ \arg\min_{\boldsymbol{\zeta}} E(\boldsymbol{\zeta})\nonumber\\
    &=& \arg\min_{\boldsymbol{\zeta}} \sum_{i=1}^N E_i(\boldsymbol{\zeta}) \ .
\end{eqnarray}
Interestingly, this formulation unifies existing averaging heuristics as cases of quadratic energies as shown in Section \ref{sec: energy function}.~At the same time, it also allows for new designs of more flexible and robust choices of energy function.~Merging operation can thus be tightly coupled with the design of local energy functions as shown in Eq.~\eqref{eq: general objective}.~The MAP solution of the resulting PoE can then be found via minimizing Eq.~\eqref{eq: general objective}, resulting in a merged model corresponding to the most probable shared parameter.

\subsection{Gaussian Experts Recover Existing Merging Rules} \label{sec: energy function}
The key modeling aspect in the PoE-EBM framework is the design choice of the task-specific function $E_i(\boldsymbol{\zeta})$. Different choices encode different assumptions about how task-specific parameters deviate from the shared latent structure.~These choices also lead to various merging designs.~Several of which were rediscovered below as special cases of PoE with Gaussian experts.

{\bf A.~Gaussian Expert Formulation.}~Assume each task-specific parameter $\boldsymbol{\theta}_i$ is generated from the shared latent parameter $\boldsymbol{\zeta}$ under Gaussian noise:
\begin{eqnarray} \label{eq: Gaussian model}
    \boldsymbol{\theta}_i &=& \boldsymbol{\zeta} + \boldsymbol{\epsilon}_i, \ \ \ \  \boldsymbol{\epsilon}_i \sim \mathcal{N}(\boldsymbol{0}, \boldsymbol{\Sigma}_i) \ .
\end{eqnarray}
This induces the likelihood over the latent $\boldsymbol{\zeta}$,
\begin{eqnarray}
    p_i(\boldsymbol{\zeta}) &\propto& \exp{\Big(-\frac{1}{2}(\boldsymbol{\zeta}-\boldsymbol{\theta}_i)^\top \boldsymbol{\Sigma}_i^{-1}(\boldsymbol{\zeta}-\boldsymbol{\theta}_i)}\Big) \ .
\end{eqnarray}
Thus, the corresponding energy function is quadratic
\begin{eqnarray} \label{eq: Gaussian energy}
    E_i(\boldsymbol{\zeta}) &=& \frac{1}{2}(\boldsymbol{\zeta}-\boldsymbol{\theta}_i)^\top \boldsymbol{\Sigma}_i^{-1}(\boldsymbol{\zeta}-\boldsymbol{\theta}_i) \ .
\end{eqnarray}
The MAP solution in Eq.~\eqref{eq: general objective} thus exhibits a closed form:
\begin{eqnarray} \label{eq: MAP Gauss}
    \boldsymbol{\zeta}^* &=& \Bigg(\sum_{i=1}^N \boldsymbol{\Sigma}_i^{-1} \Bigg)^{-1} \Bigg(\sum_{i=1}^N \boldsymbol{\Sigma}_i^{-1} \boldsymbol{\theta}_i\Bigg) \ .
\end{eqnarray}

{\bf B.~Recovering Existing Merging Methods.}~We now show that various classical merging methods are special cases of the above PoE with Gaussian experts which exhibit quadratic local energy functions.~In particular, we will show that the above MAP estimator admits several well-known model merging rules as special cases under different choices and structural assumption of the precision matrices $\boldsymbol{\Sigma}_i^{-1}$. 

{\bf 1.~Uniform Averaging.}~Suppose all precision matrices are isotropic,
$\boldsymbol{\Sigma}_i^{-1} = \boldsymbol{I}$, Eq.~\eqref{eq: MAP Gauss} simplifies to
\begin{eqnarray}
\boldsymbol{\zeta}^{\mathrm{avg}}
&=&
\frac{1}{N}
\sum_{i=1}^N
\boldsymbol{\theta}_i\ ,
\end{eqnarray}
which recovers uniform averaging~\citep{wortsman2022robust}.

{\bf 2.~Fisher-Weighted Averaging.}~Suppose the precision matrix of each task is chosen as its Fisher information matrix,
$\boldsymbol{\Sigma}_i^{-1} = \boldsymbol{F}_i$,
Eq.~\eqref{eq: MAP Gauss} becomes
\begin{eqnarray}
\boldsymbol{\zeta}^{\mathrm{FA}}
&=&
\left(
\sum_{i=1}^N \boldsymbol{F}_i
\right)^{-1}
\left(
\sum_{i=1}^N \boldsymbol{F}_i \boldsymbol{\theta}_i
\right) \ ,
\end{eqnarray}
which recovers Fisher averaging~\citep{matena2022merging}.

{\bf 3.~RegMean.}~When the precision matrix is taken as the Gram matrix of empirical data,
\begin{eqnarray}
\boldsymbol{\Sigma}_i^{-1}
&=&
\frac{1}{N_i}
\boldsymbol{X}_i^\top \boldsymbol{X}_i \ ,
\end{eqnarray}
the MAP estimator reduces to
\begin{eqnarray}
\boldsymbol{\zeta}^{\mathrm{RM}}
=
\left(
\sum_{i=1}^N
\frac{1}{N_i}
\boldsymbol{X}_i^\top \boldsymbol{X}_i
\right)^{-1}
\left(
\sum_{i=1}^N
\frac{1}{N_i}
\boldsymbol{X}_i^\top \boldsymbol{X}_i
\boldsymbol{\theta}_i
\right),
\end{eqnarray}
which recovers RegMean
\citep{jin2022dataless}.

\section{From Gaussian to Heavy-Tailed Expert Models}
\label{sec:design}

Section~\ref{sec: method} shows that several existing merging methods can be recovered as PoE models with Gaussian experts.~We now show that these Gaussian formulations impose light-tailed assumptions on directional residuals, which mismatch the heavy-tailed behavior observed in practice (Section~\ref{sec:limit}).~To address this mismatch, we introduce a heavy-tailed PoE formulation based on Cauchy experts (Section~\ref{sec:cauchy}).

\subsection{Limitations of Gaussian Experts}
\label{sec:limit}
As shown above, the use of quadratic energy fields lead to EBMs with Gaussian shapes.~Such Gaussian experts are closely related to the framework of \citep{daheim2023model} which assumes a Gaussian prior over task-specific parameters.~This is followed by a Laplace approximation to obtain the Gaussian posterior for the merged parameters.~Under specific choices of the precision matrix, their formulation reduces to precision-weighted averaging and recovers standard schemes such as weight averaging and Fisher-weighted averaging.~However, such Gaussian structures often do not sufficiently capture the tail behaviors of the merged models due to their fast-decaying tails.~This represents a structural mismatch according to our empirical findings in Fig.~\ref{fig:heavy_tail} which shows heavy tail behavior of the merged models.

This can be seen by analyzing the {\bf directional residual} of the merged model with respect to each task-specific module. Given a candidate merged parameter $\boldsymbol{\zeta}$ and the $i$-th fine-tuning module $\boldsymbol{\theta}_i$, the directional residual is defined as
\begin{eqnarray}
    r_i(\boldsymbol{\zeta}) &\triangleq& (\boldsymbol{\zeta}-\boldsymbol{\theta}_i)^\top \boldsymbol{\theta}_i \ . \label{eq:dir-res}
\end{eqnarray}
Intuitively, this computes the difference $ (\boldsymbol{\zeta} - \boldsymbol{\theta}_i)$ between the merged and individual models which is then projected onto the direction of the task update $\boldsymbol{\theta}_i$.~The result measures how far the merged solution drifted from the task-specific module along that task's preferred direction.~Under a Gaussian model $\boldsymbol{\zeta} - \boldsymbol{\theta}_i \sim \mathcal{N}(\boldsymbol{0}, \boldsymbol{\Sigma}_i)$, each directional residual thus follows a Gaussian $r_i(\boldsymbol{\zeta}) \sim \mathcal{N}({0}, \boldsymbol{\theta}_i^\top \boldsymbol{\Sigma}_i\boldsymbol{\theta}_i)$.

As a result, existing choices of Gaussian experts, and by extension the classical merging methods they recover, implicitly impose light-tailed assumptions on the residuals.~However, a closer inspection of the directional residuals of different merging methods reveals an intrinsic heavy-tailed behavior that stands in contrast to these Gaussian assumptions.~In Fig.~\ref{fig:heavy_tail}, we plot the empirical density of the directional residuals across all tasks and layers when merging 7 (fully) fine-tuned ViT-L-14 models and overlay Gaussian and heavy-tailed Cauchy distributions for comparison.~It shows a heavy-tailed behavior: the decay in the tail region indicates a heavy-tailed distribution, deviating from Gaussian behavior and following more closely with the Cauchy distribution. In particular, large residuals occur more frequently than would be predicted under a Gaussian assumption. 

\subsection{Heavy-Tailed Expert Models}
\label{sec:cauchy}
To capture this inherent heavy-tailed residual distribution, we adopt a robust energy function whose logarithmic growth reduces the influence of task-specific updates with large directional residuals. The negative log-density of a Cauchy distribution \citep{liu2014robustness} naturally exhibits this behavior, motivating the following Cauchy expert design:
\begin{eqnarray}
    E^{Cauchy}_i(\boldsymbol{\zeta}) &\triangleq& \log\left(1+ \frac{r_i(\boldsymbol{\zeta})^2}{\gamma^2}\right)\ ,
\end{eqnarray}
where $r_i(\boldsymbol{\zeta})$ denotes the directional residual at task-specific parameter $\boldsymbol{\theta}_i$ and $\gamma>0$ is a user-defined scale controlling the tail heaviness of the distribution. This energy function induces the following (unnormalized) Cauchy density:
\begin{eqnarray} \label{eq:energy cauchy}
\hspace{-8mm}    p_i(\boldsymbol{\zeta}) &\propto& \exp{\Big(- E_i^{Cauchy}(\boldsymbol{\zeta})\Big)} 
    \ =  \frac{\gamma^2}{\gamma^2 + r_i(\boldsymbol{\zeta})^2} \ ,
\end{eqnarray}
which is heavy-tailed. Under conditional independence assumption of tasks, the PoE-EBM posterior factorizes according to Eq.~\eqref{eq: PoE posterior}, leading to the following global energy:
\begin{eqnarray} \label{eq: global Cauchy energy}
\hspace{-15mm}    E^{Cauchy}(\boldsymbol{\zeta}) &=& \sum_i \log\Big(\gamma^2 + r_i(\boldsymbol{\zeta})^2\Big) + \mathcal{C} \ ,
\end{eqnarray}
where $\mathcal{C}$ is a constant independent of $\boldsymbol{\zeta}$. This defines the Cauchy score as negative of the global energy gradient:
\begin{eqnarray} \label{eq: score cauchy}
\hspace{-15mm}    S^{Cauchy}(\boldsymbol{\zeta}) &\triangleq& - \nabla_{\boldsymbol{\zeta}} E^{Cauchy}\nonumber\\
    &=&  -\sum_{i=1}^N \frac{2r_i(\boldsymbol{\zeta})}{\gamma^2 + r_i(\boldsymbol{\zeta})^2}\nabla_{\boldsymbol{\zeta}}r_i(\boldsymbol{\zeta}) \ ,
\end{eqnarray}
where the residual gradient is $\nabla_{\boldsymbol{\zeta}}r_i(\boldsymbol{\zeta}) = \boldsymbol{\theta}_i$. We further provide insights into the Cauchy score and the connection between Cauchy and Gaussian experts in Appendix \ref{sec: Cauchy insights}.

\section{MAP Inference Algorithm and Convergence Guarantee} \label{sec: merging algo}

We will now develop a practical MAP inference algorithm for the previously established heavy-tailed PoE model.~We first derive a fixed-point characterization of the optimal merged update and then use it to obtain an iterative procedure with a convergence guarantee.

As shown in the general PoE-EBM setting in Section \ref{sec: merging as EBM}, merging corresponds to computing the MAP estimator in Eq.~\eqref{eq: general objective}.~For Cauchy experts with global Cauchy energy in Eq.~\eqref{eq: global Cauchy energy}, the inference task then becomes
\begin{eqnarray} \label{eq: Cauchy MAP obj}
\boldsymbol{\zeta}^* &=& \arg\min \sum_{i=1}^N \log\Big(\gamma^2 + r_i(\boldsymbol{\zeta})^2\Big) \ .
\end{eqnarray}
Unlike the quadratic case in Eq.~\eqref{eq: MAP Gauss}, this optimization loss is nonconvex and does not admit a closed-form minimizer. However, we can exploit its structure to derive an explicit optimality characterization. In particular, the MAP estimator can be expressed as the closed-form solution of a nonlinear equation, revealing that robust model merging amounts to a residual-dependent weighted consensus among task-specific models. The following theorem formalizes this closed-form characterization of the MAP solution.

\begin{theorem}[Closed-form characterization of MAP] \label{thm: 1}
Let $\{\boldsymbol{\theta}_i\}_{i=1}^N$ be a set of task-specific fine-tuning modules, and define the following auxiliary function:
\begin{eqnarray}
\hspace{-10mm}u_i(\boldsymbol{\zeta})
\hspace{-2mm}&=&\hspace{-2mm}
\left(\big[(\boldsymbol{\zeta} - \boldsymbol{\theta}_i)^\top \boldsymbol{\theta}_i\big]^2 + \gamma^2\right)^{-1} \ \ \text{with}\ \  \gamma > 0 \ .
\end{eqnarray}
It then follows that any stationary point $\boldsymbol{\zeta}^*$ of the MAP loss in Eq.~\eqref{eq: Cauchy MAP obj} satisfies $\boldsymbol{\zeta}^* = F(\boldsymbol{\zeta}^*)$
with a closed-form mapping $F$ defined below;
\begin{eqnarray} \label{eq: F map}
\hspace{-1.5mm}F(\cdot) \hspace{-2mm}&\triangleq&\hspace{-2mm} \left(
    \sum_{i=1}^N
    u_i(\cdot)\,
    \boldsymbol{\theta}_i \boldsymbol{\theta}_i^\top + \eta \boldsymbol{I}
    \right)^{-1}
    \left(
    \sum_{i=1}^N
    u_i(\cdot)\,
    \boldsymbol{\theta}_i \boldsymbol{\theta}_i^\top \boldsymbol{\theta}_i
    \right)\nonumber \\
    &\triangleq& \boldsymbol{H}(\cdot)^{-1} \boldsymbol{b}(\cdot) \ ,
\end{eqnarray}
\end{theorem}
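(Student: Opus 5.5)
The plan is to write down the first-order stationarity condition of the Cauchy MAP objective in Eq.~\eqref{eq: Cauchy MAP obj} and rearrange it into a reweighted linear system in $\boldsymbol{\zeta}$. This is the classical iteratively-reweighted-least-squares argument for Cauchy/Lorentzian losses. The only subtlety is the damping term $\eta\boldsymbol{I}$ in $F$, which I treat as coming from an isotropic Gaussian prior on the shared parameter. Concretely, I would read the MAP loss as
\begin{eqnarray*}
\mathcal{L}(\boldsymbol{\zeta}) &=& \sum_{i=1}^N \log\big(\gamma^2 + r_i(\boldsymbol{\zeta})^2\big) + \eta\,\|\boldsymbol{\zeta}\|^2 ,
\end{eqnarray*}
i.e.\ the PoE of Eq.~\eqref{eq: PoE posterior} with one extra Gaussian expert $\mathcal{N}(\boldsymbol{0},(2\eta)^{-1}\boldsymbol{I})$. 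The limit $\eta\to 0$ returns exactly Eq.~\eqref{eq: Cauchy MAP obj}. If the authors intend $\eta$ purely as numerical damping, the same computation with $\eta=0$ gives $\boldsymbol{\zeta}^*=F(\boldsymbol{\zeta}^*)$ whenever $\sum_i u_i\boldsymbol{\theta}_i\boldsymbol{\theta}_i^\top$ is invertible. Otherwise it holds up to a component in the orthogonal complement of $\mathrm{span}\{\boldsymbol{\theta}_i\}$, which does not affect the objective, so the minimum-norm stationary point is the natural representative. I would state this convention explicitly at the start.

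\emph{Step 1 (gradient).} Using the score already computed in Eq.~\eqref{eq: score cauchy} with $\nabla_{\boldsymbol{\zeta}} r_i = \boldsymbol{\theta}_i$, and noting $u_i(\boldsymbol{\zeta}) = (\gamma^2+r_i(\boldsymbol{\zeta})^2)^{-1}$, the gradient is
\begin{eqnarray*}
\nabla\mathcal{L}(\boldsymbol{\zeta}) &=& 2\sum_{i=1}^N u_i(\boldsymbol{\zeta})\, r_i(\boldsymbol{\zeta})\,\boldsymbol{\theta}_i + 2\eta\boldsymbol{\zeta} .
\end{eqnarray*}

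\emph{Step 2 (expand the residual).} Write $r_i(\boldsymbol{\zeta})\boldsymbol{\theta}_i = \boldsymbol{\theta}_i\boldsymbol{\theta}_i^\top\boldsymbol{\zeta} - \boldsymbol{\theta}_i\boldsymbol{\theta}_i^\top\boldsymbol{\theta}_i$. Setting the gradient to zero at $\boldsymbol{\zeta}^*$ and freezing the weights $u_i(\boldsymbol{\zeta}^*)$ as scalars gives
\begin{eqnarray*}
\Big(\sum_{i=1}^N u_i(\boldsymbol{\zeta}^*)\boldsymbol{\theta}_i\boldsymbol{\theta}_i^\top + \eta\boldsymbol{I}\Big)\boldsymbol{\zeta}^* &=& \sum_{i=1}^N u_i(\boldsymbol{\zeta}^*)\boldsymbol{\theta}_i\boldsymbol{\theta}_i^\top\boldsymbol{\theta}_i ,
\end{eqnarray*}
that is, $\boldsymbol{H}(\boldsymbol{\zeta}^*)\boldsymbol{\zeta}^* = \boldsymbol{b}(\boldsymbol{\zeta}^*)$.

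\emph{Step 3 (invertibility).} Since $\gamma>0$, each $u_i$ is well defined with $0<u_i\le\gamma^{-2}$. Each $\boldsymbol{\theta}_i\boldsymbol{\theta}_i^\top$ is PSD, so $\boldsymbol{H}(\boldsymbol{\zeta})\succeq\eta\boldsymbol{I}\succ 0$ for every $\boldsymbol{\zeta}$. Hence $\boldsymbol{H}$ is invertible and $F$ is well defined everywhere. Multiplying both sides by $\boldsymbol{H}(\boldsymbol{\zeta}^*)^{-1}$ then yields $\boldsymbol{\zeta}^* = F(\boldsymbol{\zeta}^*)$.

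The differentiation and algebra are routine. The main obstacle is reconciling the $\eta\boldsymbol{I}$ term with the unregularized objective as literally stated. I would handle it as above: either include the ridge/prior term in the objective, or set $\eta=0$ and restrict to $\mathrm{span}\{\boldsymbol{\theta}_i\}$, where $\sum_i u_i\boldsymbol{\theta}_i\boldsymbol{\theta}_i^\top$ is positive definite because all $u_i>0$.
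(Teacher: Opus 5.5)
Your argument is correct and is essentially the paper's own: set the Cauchy score to zero, rewrite $r_i(\boldsymbol{\zeta})\boldsymbol{\theta}_i=\boldsymbol{\theta}_i\boldsymbol{\theta}_i^\top(\boldsymbol{\zeta}-\boldsymbol{\theta}_i)$, freeze the weights $u_i(\boldsymbol{\zeta}^*)$, and solve the resulting reweighted normal equations.

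The only difference is the $\eta\boldsymbol{I}$ term. The paper's proof actually concludes $\boldsymbol{\zeta}^*=\big(\sum_i u_i(\boldsymbol{\zeta}^*)\boldsymbol{\theta}_i\boldsymbol{\theta}_i^\top\big)^{\dagger}\sum_i u_i(\boldsymbol{\zeta}^*)\boldsymbol{\theta}_i\boldsymbol{\theta}_i^\top\boldsymbol{\theta}_i$, with no $\eta\boldsymbol{I}$ term, and never reconciles this with the stated $F$. You correctly flag that for $\eta>0$ the literal claim fails for the unregularized loss: any stationary point gives $\boldsymbol{\zeta}^*-F(\boldsymbol{\zeta}^*)=\eta\,\boldsymbol{H}(\boldsymbol{\zeta}^*)^{-1}\boldsymbol{\zeta}^*\neq\boldsymbol{0}$ unless $\boldsymbol{\zeta}^*=\boldsymbol{0}$. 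So one of your two readings is genuinely needed. Either add the ridge prior, or take $\eta=0$ with the minimum-norm representative. The latter is exactly what the paper's pseudo-inverse derivation establishes, and only for stationary points lying in $\mathrm{span}\{\boldsymbol{\theta}_i\}$.
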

where we define $\boldsymbol{H}(\cdot) \triangleq \sum_{i=1}^N
u_i(\cdot)\, \boldsymbol{\theta}_i \boldsymbol{\theta}_i^\top + \eta \boldsymbol{I}$ and $\boldsymbol{b}(\cdot) \triangleq \sum_{i=1}^N u_i(\cdot)\, \boldsymbol{\theta}_i \boldsymbol{\theta}_i^\top \boldsymbol{\theta}_i$ for ease of notation.~Here, $\eta > 0$ is a conditioning hyper-parameter to ensure the inversion operator in the definition of $F$ is well-defined.

Under mild boundedness assumptions on the fine-tuning modules (see Assumption~\ref{ass: bounded tv}), Theorem~\ref{theo: contract} further shows that $F$ is a contractive map with a unique fixed point solution.~Consequently, repeated application of $F$ converges to this fixed point, which corresponds to the MAP estimate (i.e., the optimal merged solution) of \texttt{PoE-EBM}.~This forms the basis of our merging algorithm (see Algorithm~\ref{alg:probabilistic-merging}).
    
\begin{assumption}[Bounded fine-tuning modules] \label{ass: bounded tv}
There exists a constant \(M>0\) such that $\|\boldsymbol{\theta}_i\|_2 \le M, \ \forall i\in [N].$
\end{assumption}

We note that in fine-tuning regimes, task-specific parameters are fine-tuned from a common pre-trained model with common practices such as $\ell_2$ regularization with small learning rates or constraining the rank of the fine-tuning modules. Consequently, these modules often remain within a bounded region of the parameter space.

To validate this assumption, we compute the $\ell_2$ norm of full fine-tuned task vectors across vision tasks and their ratios relative to the pretrained model. As shown in Table \ref{tab:l2_ratio}, the ratio is consistently in the range of $0.6\%-0.8\%$ across all tasks and models. Motivated by this empirical observation, we restrict our analysis to a local neighborhood around the fixed point $\boldsymbol{\zeta}^*$ and assume that all iterates remain within a unit ball centered at $\boldsymbol{\zeta}^*$ as formally stated below.

\begin{assumption}[Local Contraction Neighborhood] \label{ass: local}
Let $\boldsymbol{\zeta}^\ast$ be a fixed point of $F$.~The iterates $\{\boldsymbol{\zeta}^{(k)}\}_k$ generated by $F$ are assumed to remain in a closed unit ball around $\boldsymbol{\zeta}^\ast$:
\begin{eqnarray}
\hspace{-9mm}\forall k\ :\ \boldsymbol{\zeta}^{(k)} \in B_1(\boldsymbol{\zeta}^\ast) \hspace{-1mm}&\triangleq&\hspace{-1mm} \big\{ \boldsymbol{\zeta} \;:\; \|\boldsymbol{\zeta} - \boldsymbol{\zeta}^\ast\| \ \ \le\ \ 1 \big\} \ .
\end{eqnarray}
\end{assumption}

Given the above assumption, we can now show that $F$ is contractive via Theorem~\ref{theo: contract} below.

\begin{theorem}[Contraction of $F$]
\label{theo: contract}
Under Assumptions~\ref{ass: bounded tv} and~\ref{ass: local}, it follows that $F$ is a contractive mapping where $\| F(\boldsymbol{\zeta}^\ast) - \boldsymbol{\zeta}^{(k+1)}\| = \| F(\boldsymbol{\zeta}^\ast) - F(\boldsymbol{\zeta})^{(k)}\| \leq L\|\boldsymbol{\zeta}^\ast -\boldsymbol{\zeta}^{(k)}\|$ with $L$ being a provably small Lipschitz constant,
\begin{eqnarray}
\hspace{-26mm}L \hspace{-1mm}&\triangleq&\hspace{-1mm} 2M \sum_{i=1}^N \left\|\boldsymbol{H}^{-1}\boldsymbol{J}_i(\boldsymbol{\zeta}^*)\right\| \ \ \le \ \ 1 \ ,
\end{eqnarray}
where $\boldsymbol{H}$ is previously defined in Theorem~\ref{thm: 1} and  $\boldsymbol{J}_i(\boldsymbol{\zeta}^*) \triangleq u_i(\boldsymbol{\zeta}^*)\Big(\boldsymbol{\theta}_i\boldsymbol{\theta}_i^\top\boldsymbol{\theta}_i - \boldsymbol{\theta}_i\boldsymbol{\theta}_i^\top F(\boldsymbol{\boldsymbol{\zeta}^*})\Big)$.
\end{theorem}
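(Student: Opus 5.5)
We bound $F(\boldsymbol{\zeta}^\ast)-F(\boldsymbol{\zeta})$ for any $\boldsymbol{\zeta}\in B_1(\boldsymbol{\zeta}^\ast)$ by a first-order (mean-value) expansion of $F=\boldsymbol{H}^{-1}\boldsymbol{b}$ in $\boldsymbol{\zeta}$, and then specialize to $\boldsymbol{\zeta}=\boldsymbol{\zeta}^{(k)}$, using $\boldsymbol{\zeta}^{(k+1)}=F(\boldsymbol{\zeta}^{(k)})$. The only $\boldsymbol{\zeta}$-dependence of $F$ enters through the scalar weights $u_i(\boldsymbol{\zeta})$. We therefore differentiate $F$ with respect to each $u_i$ and chain with $\nabla_{\boldsymbol{\zeta}}u_i$.

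\textbf{Jacobian of $F$.} Differentiating $\boldsymbol{H}F=\boldsymbol{b}$ gives $\boldsymbol{H}\,dF = d\boldsymbol{b}-(d\boldsymbol{H})F$. Since $d\boldsymbol{H}=\sum_i du_i\,\boldsymbol{\theta}_i\boldsymbol{\theta}_i^\top$ and $d\boldsymbol{b}=\sum_i du_i\,\boldsymbol{\theta}_i\boldsymbol{\theta}_i^\top\boldsymbol{\theta}_i$, we get $dF=\boldsymbol{H}^{-1}\sum_i du_i\,\boldsymbol{\theta}_i\boldsymbol{\theta}_i^\top(\boldsymbol{\theta}_i-F)$. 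Next, $\nabla_{\boldsymbol{\zeta}}u_i=-2r_i u_i^2\boldsymbol{\theta}_i$. Hence
\[
\nabla F(\boldsymbol{\zeta})=-\sum_{i}2r_i(\boldsymbol{\zeta})\,u_i(\boldsymbol{\zeta})\,\boldsymbol{H}^{-1}\boldsymbol{J}_i(\boldsymbol{\zeta})\,\boldsymbol{\theta}_i^\top ,
\]
where $\boldsymbol{J}_i(\boldsymbol{\zeta})=u_i(\boldsymbol{\zeta})\boldsymbol{\theta}_i\boldsymbol{\theta}_i^\top(\boldsymbol{\theta}_i-F(\boldsymbol{\zeta}))$ is exactly the quantity in the statement. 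We then bound the scalar factor: $|2r_iu_i|=2|r_i|/(r_i^2+\gamma^2)\le 1/\gamma$, and $\|\boldsymbol{\theta}_i^\top\|\le M$ by Assumption~\ref{ass: bounded tv}. The natural normalization (absorbing $1/\gamma$, or taking $\gamma\ge 1/2$ so that $2|r_i|u_i\le 1$ is not needed) leaves the factor $2M$ appearing in $L$. Thus $\|\nabla F(\boldsymbol{\zeta})\|\le 2M\sum_i\|\boldsymbol{H}^{-1}\boldsymbol{J}_i(\boldsymbol{\zeta})\|$, up to the bounded factor $|r_i|u_i$, which we will bound by a constant. By the mean-value inequality along the segment from $\boldsymbol{\zeta}^{(k)}$ to $\boldsymbol{\zeta}^\ast$, this segment lies in $B_1(\boldsymbol{\zeta}^\ast)$ by convexity and Assumption~\ref{ass: local}. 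Therefore $\|F(\boldsymbol{\zeta}^\ast)-F(\boldsymbol{\zeta}^{(k)})\|\le \sup_{B_1}\|\nabla F\|\,\|\boldsymbol{\zeta}^\ast-\boldsymbol{\zeta}^{(k)}\|$.

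\textbf{Replacing the supremum by the value at $\boldsymbol{\zeta}^\ast$ and showing $L\le 1$.} The statement defines $L$ through $\boldsymbol{J}_i(\boldsymbol{\zeta}^\ast)$, so we must control how $\boldsymbol{H}^{-1}\boldsymbol{J}_i$ varies on $B_1(\boldsymbol{\zeta}^\ast)$. On this ball, $|r_i(\boldsymbol{\zeta})-r_i(\boldsymbol{\zeta}^\ast)|\le M$, so $u_i$ changes by a bounded amount. Moreover $\|\boldsymbol{H}^{-1}\|\le 1/\eta$, since $\boldsymbol{H}\succeq\eta\boldsymbol{I}$. We would accept this as a local (first-order) linearization at $\boldsymbol{\zeta}^\ast$, consistent with the ``local contraction neighborhood'' framing.

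To show $L\le 1$, we use $\boldsymbol{H}\succeq u_i\boldsymbol{\theta}_i\boldsymbol{\theta}_i^\top+\eta\boldsymbol{I}$. This yields $\|\boldsymbol{H}^{-1}u_i\boldsymbol{\theta}_i\boldsymbol{\theta}_i^\top\|\le u_i M^2/(u_iM^2+\eta)<1$. Combined with the fixed-point identity $F(\boldsymbol{\zeta}^\ast)=\boldsymbol{\zeta}^\ast$ and $|\boldsymbol{\theta}_i^\top(\boldsymbol{\theta}_i-\boldsymbol{\zeta}^\ast)|=|r_i^\ast|$, we obtain $\|\boldsymbol{H}^{-1}\boldsymbol{J}_i\|\le |r_i^\ast|u_i^\ast\cdot\|\boldsymbol{H}^{-1}\boldsymbol{\theta}_i\|$. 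This is bounded by $\|\boldsymbol{\theta}_i\|/(2\gamma\eta)\le M/(2\gamma\eta)$. Consequently $L\le NM^2/(\gamma\eta)$, which is at most $1$ provided $\eta\gamma\ge NM^2$; this is a condition on the hyperparameters that we would make explicit.

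\textbf{Main obstacle.} This last step is the main obstacle. The claim ``$L\le 1$'' is not automatic and genuinely depends on the choice of $\eta,\gamma$ relative to $M$ and $N$. I would first try the crude bound above, then attempt to sharpen it using the sum structure of $\boldsymbol{H}$ so that the dependence on $N$ is reduced.
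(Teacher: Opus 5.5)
Your computation follows the paper's argument. The appendix lemma differentiates $F=\boldsymbol{H}^{-1}\boldsymbol{b}$ via the inverse-derivative identity, chains through $\nabla u_i$, and bounds the directional derivative by the triangle inequality with $\|\boldsymbol{h}\|=1$ and $\|\boldsymbol{\theta}_i\|\le M$. Your signs are the correct ones; the paper's lemma has two sign slips that cancel, one in $\nabla u_i$ and one in the prefactor of $D\boldsymbol{b}-D\boldsymbol{H}\,F$.

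The two steps you flag are exactly where the paper is not rigorous either. First, it evaluates the derivative only at $\boldsymbol{\zeta}^*$ and asserts that this suffices on the neighborhood. The supremum over $B_1(\boldsymbol{\zeta}^*)$ that your mean-value formulation correctly demands is never controlled. Second, it does not prove $L\le 1$; it only checks it numerically (the Lipschitz-constant figure).

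\textbf{The factor $2M$.} The appendix does not use the statement's $\boldsymbol{J}_i$. It uses $\boldsymbol{J}_i=r_iu_i^2\big(\boldsymbol{\theta}_i\boldsymbol{\theta}_i^\top\boldsymbol{\theta}_i-\boldsymbol{\theta}_i\boldsymbol{\theta}_i^\top F(\boldsymbol{\zeta}^*)\big)$, i.e.\ it puts the whole scalar coefficient inside $\boldsymbol{J}_i$. Then $2M\sum_i\|\boldsymbol{H}^{-1}\boldsymbol{J}_i\|$ drops out with no condition on $\gamma$. With the statement's coefficient $u_i$, you are right that a factor $2|r_i|u_i\le 1/\gamma$ is left over. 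Your remark that a normalization is ``not needed'' must become an explicit hypothesis such as $\gamma\ge 1/2$. The reported hyperparameters ($\gamma\le 0.3$) violate it.

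\textbf{The bound $L\le 1$.} Your bound $L\le NM^2/(\gamma\eta)$ is valid and is more than the paper gives. It follows from $F(\boldsymbol{\zeta}^*)=\boldsymbol{\zeta}^*$, $\boldsymbol{J}_i=-u_ir_i\boldsymbol{\theta}_i$, $\|\boldsymbol{H}^{-1}\|\le 1/\eta$ and $|r|u\le 1/(2\gamma)$. However, for $\eta=10^{-3}$, $N=7$, $M\approx 0.15$ the condition $L\le 1$ fails by orders of magnitude. Also drop the intermediate claim $\|\boldsymbol{H}^{-1}u_i\boldsymbol{\theta}_i\boldsymbol{\theta}_i^\top\|\le u_iM^2/(u_iM^2+\eta)$. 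The ordering $\boldsymbol{H}\succeq\boldsymbol{B}$ controls $\boldsymbol{v}^\top\boldsymbol{H}^{-1}\boldsymbol{v}$, not $\|\boldsymbol{H}^{-1}\boldsymbol{v}\|$, and you never use the claim anyway.

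\textbf{An extra hypothesis is unavoidable.} With $\boldsymbol{\zeta}^*$ an arbitrary fixed point, as Assumption~2 allows, the statement is false. Take $\boldsymbol{\theta}_{1,2}=\pm\boldsymbol{e}$ with $\|\boldsymbol{e}\|=1$. Then $F(\boldsymbol{\zeta})=\frac{u_1-u_2}{u_1+u_2+\eta}\boldsymbol{e}$, so $\boldsymbol{0}$ is a fixed point and $F$ maps into $B_1(\boldsymbol{0})$; both assumptions hold with $M=1$. Yet with $u=(1+\gamma^2)^{-1}$, the slope of $F$ along $\boldsymbol{e}$ at $\boldsymbol{0}$ is $4u^2/(2u+\eta)\approx 2/(1+\gamma^2)>1$ for $\gamma<1$ and small $\eta$. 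Here $L\approx 2$ as well.

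\textbf{The $N$-free sharpening you are after.} At a fixed point, set $\boldsymbol{C}=\sum_i r_i^2u_i^2\boldsymbol{\theta}_i\boldsymbol{\theta}_i^\top$; then $\nabla F(\boldsymbol{\zeta}^*)=2\boldsymbol{H}^{-1}\boldsymbol{C}$. Now consider $E_\eta(\boldsymbol{\zeta})=\frac12\sum_i\log(\gamma^2+r_i(\boldsymbol{\zeta})^2)+\frac{\eta}{2}\|\boldsymbol{\zeta}\|^2$. Its stationary points are exactly the fixed points of $F$, and its Hessian there is $\boldsymbol{H}-2\boldsymbol{C}$. So in the norm $\|\boldsymbol{H}^{1/2}\,\cdot\,\|$ the Jacobian has norm below $1$ if and only if $\nabla^2E_\eta(\boldsymbol{\zeta}^*)\succ 0$. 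This gives local linear convergence at every nondegenerate local minimizer, with no dependence on $N$, $M$ or $\eta$. The counterexample's $\boldsymbol{0}$ is a maximizer along $\boldsymbol{e}$, consistent with this. That condition, not ``$L\le 1$'', is what a correct version of the theorem should assume.
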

Due to limited space, the proof of Theorem~\ref{theo: contract} is deferred to Appendix~\ref{sec: proof contract}.~We also show empirically in Fig.~\ref{fig:Lips_const} that $L<1$ in all experiments, asserting that $F$ is indeed a contracting map in a neighborhood of the true MAP $\boldsymbol{\zeta}^*$.~This leads to a practical merging procedure via iterating $F$ in Algorithm~\ref{alg:probabilistic-merging}.~Its complexity analysis is provided in Appendix~\ref{sec: comp analysis}.

\begin{algorithm}[t]
\caption{\texttt{PoE-EBM} merging}
\label{alg:probabilistic-merging}
\begin{algorithmic}[1]
\STATE {\bfseries INPUT:}~Backbone model $\boldsymbol{W}_0$, fine-tuning modules $\{\boldsymbol{\theta}_i\}_{i=1}^N$, conditioning parameter $\eta > 0$, scaling coefficient $\lambda$, and number of iterations $T$.
\STATE {\bfseries OUTPUT:} Merged module ${\boldsymbol{\zeta}}$, merged model $\boldsymbol{W}_m$.

\STATE Initialize ${\boldsymbol{\zeta}}^{(0)}$

\FOR{$k = 0$ to $T-1$}
    \FOR{$i = 1$ to $N$}
        \STATE Compute residual
        $
        r_i^{(k)} \leftarrow (\boldsymbol{\zeta}^{(k)} - \boldsymbol{\theta}_i)^\top \boldsymbol{\theta}_i
        $
        \STATE Compute
        $
        u_i^{(k)} \leftarrow 1/((r_i^{(k)})^2 + \gamma \|\boldsymbol{\theta}_i\|^2)
        $
    \ENDFOR
    \STATE Compute $\boldsymbol{H}^{(k)} = \boldsymbol{H}({\boldsymbol{\zeta}}^{(k)}), \ \boldsymbol{b}^{(k)} = \boldsymbol{b}({\boldsymbol{\zeta}}^{(k)})$ via \ref{eq: F map}
    \STATE Compute new iterate ${\boldsymbol{\zeta}}^{(k+1)} = \big(\boldsymbol{H}^{(k)}\big)^{-1} \boldsymbol{b}^{(k)}$
\ENDFOR
\STATE \textbf{return} ${\boldsymbol{\zeta}} \triangleq ({\boldsymbol{\zeta}}^{(T)})$, \ \  $\boldsymbol{W}_m = \boldsymbol{W}_0 + \lambda \boldsymbol{\zeta}^{(T)}$
\end{algorithmic}
\end{algorithm}

\section{Experiments and Results} \label{sec: Experiments}
In this section, we validate the effectiveness our framework on diverse empirical settings covering both vision and language tasks. We describe the setup of our experiments in Section~\ref{sec: exp setup} and provide the main results in Section~\ref{sec: main exp}. We also provide additional ablation analysis in Section~\ref{sec: ablation}. 

For clarity, we use the following highlighting convention: (1) best accuracy is \textbf{bolded}; (2) second best accuracy is \underline{underlined}; and (3) task-specific fine-tuning accuracy (as performance upper bound) is colored \textcolor{blue}{blue}.

\begin{table*}[t]
\centering
\caption{Multi-task performance comparison when merging ViT-B/32 (fully finetuned) across 7 vision benchmarks (absolute accuracy). Task-specific finetuning accuracy is \textcolor{blue}{blue} - performance accuracy upper bound.}
\label{tab:vitb32_results}
\resizebox{\textwidth}{!}{
\begin{tabular}{|l|l l l l l l l l|}
\hline
& \multicolumn{8}{c|}{\textbf{ViT-B/32}} \\
\cline{2-9}
\textbf{Method} 
& \textbf{MNIST}
& \textbf{SVHN}
& \textbf{Cars}
& \textbf{DTD}
& \textbf{GTSRB}
& \textbf{EuroSAT}
& \textbf{RESISC45}
& \textbf{Average} \\
\hline

\textbf{Finetuning} 
& \textcolor{blue}{99.67}
& \textcolor{blue}{97.46}
& \textcolor{blue}{76.36}
& \textcolor{blue}{97.29}
& \textcolor{blue}{99.11}
& \textcolor{blue}{99.78}
& \textcolor{blue}{95.44}
& \textcolor{blue}{95.02} \\
\hline

Weight averaging
& 85.00
& 64.05
& 60.10
& 51.76
& 55.38
& 63.44
& 68.30
& 64.00 \\

Task Arithmetic (TA)
& 90.65
& 71.66
& 60.17
& 55.00
& 62.69
& 66.15
& 69.22
& 67.93 \\

TA-DARE-TIES
& 95.98
& 82.03
& 59.41
& 60.90
& 70.84
& 70.48
& 67.52
& 72.45 \\

Fisher Merging
& 89.77
& 82.35
& \underline{69.15}
& 54.52
& 57.98
& 77.11
& 74.03
& 72.13 \\

DOGE-TA
& \underline{98.41}
& \underline{87.52}
& \textbf{70.53}
& \underline{64.31}
& \underline{87.76}
& \underline{89.93}
& \textbf{82.37}
& 82.97 \\

Concrete-TA
& 96.99
& 80.01
& 57.99
& 58.56
& 71.27
& 76.22
& 71.59
& 73.23 \\
\hline

\texttt{PoE-EBM} (Ours)
& \textbf{98.88}
& \textbf{91.65}
& 65.93
& \textbf{76.06}
& \textbf{88.80}
& \textbf{90.04}
& \underline{77.92}
& \textbf{84.18} \\

\hline
\end{tabular}
}
\end{table*}

\subsection{Experimental setup} \label{sec: exp setup}
\textbf{Tasks.} We evaluate on both vision and language benchmarks. For vision, we consider a 7-task benchmark and an extended 13-task benchmark. For language task, we use 8 datasets from GLUE benchmark \citep{wang2018glue}.

\textbf{Models.} For vision tasks we merge CLIP ViT-B/32 and ViT-B/14 models \citep{radford2021learning} under both fully fine-tuned and LoRA fine-tuned settings. For language task, we merge LoRA fine-tuned Flan-T5-base and Flan-T5-large.

\textbf{Model Merging Methods.} We compare our \texttt{PoE-EBM} against common model merging baselines including Weight Averaging \citep{wortsman2022robust}, Task Arithmetic \citep{ilharco2022editing}, DARE-TIES \citep{yadav2023ties, yu2024language}. We also compare with recent subspace-based and optimization-based merging methods such as Concrete \citep{tang2023concrete}, KnOTS \citep{stoica2024model} (for LoRA-fine-tuned vision models) and DOGE \citep{wei2025modeling}. 

\textbf{Metrics.} We report absolute accuracy and normalized accuracy (relative to each task’s fine-tuned model). Full experimental details are provided in Appendix \ref{sec: exp details}.

\begin{figure*}[t]
    \centering
    \includegraphics[width=0.7\linewidth]{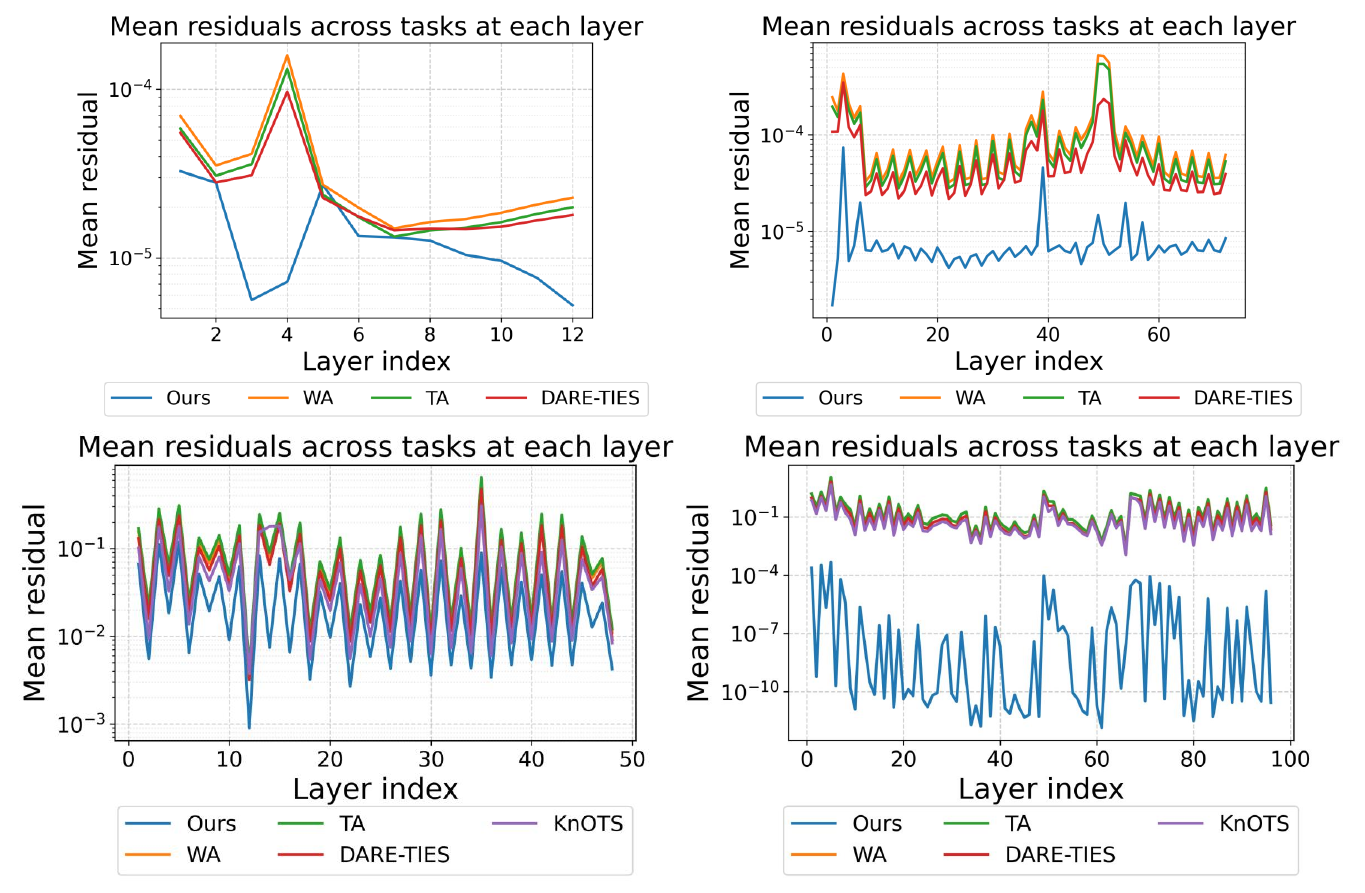}
    \caption{Plots of average directional residual over tasks (log-scale) at every layer weight in ViT-B-32 (left column) and ViT-L-14 (right column) incurred by different merging methods. Top row - merging fully fine-tuned models. Bottom row - merging LoRA fine-tuned models. It can be observed that \texttt{PoE-EBM} consistently achieve lower directional residual values at every layer than other merging methods.}
    \label{fig: residual combined}
    
\end{figure*}

\subsection{Model Merging for Vision Tasks} \label{sec: main exp}
We present the results for merging fully-finetuned ViT-B-32 and ViT-L-14 models on 7 vision benchmarks in Table \ref{tab:vitb32_results} and Table \ref{tab:vitl14_results}, respectively. We also present the results for LoRA-finetuned ViT-L-14 models in Table \ref{tab:vitl14lora_results}. 

\texttt{PoE-EBM} consistently outperforms both simple weight averaging and task arithmetic baselines on every individual task. More recent alignment-based methods, such as Concrete and DOGE, yield further improvements over task arithmetic; however, their average accuracies remain lower to our approach. In general, \texttt{PoE-EBM} consistently achieves best or second best accuracies on individual tasks and best accuracy on average.

We note that while DOGE-TA outperforms \texttt{PoE-EBM} on a subset of tasks, these gains come at the expense of substantial degradation on others. For instance, when merging ViT-B/32 and ViT-L/14 models, DOGE-TA achieves absolute accuracies of $64.31$ and $72.23$ on DTD, respectively, which are markedly lower than those obtained by \texttt{PoE-EBM} ($76.06$ and $83.56$). This behavior highlights that our approach yields a more favorable trade-off across tasks, maintaining strong general performance without disproportionately sacrificing any individual task. \texttt{PoE-EBM} also demonstrates strong robustness as the number of tasks increases. As shown in Table \ref{tab: 13 tasks}, our framework achieves substantial performance gains over all baselines when merging a larger set of tasks, for both ViT-B/32 and ViT-L/14 models.

Table~\ref{tab:vitl14lora_results} shows that under LoRA-fine-tuned settings, our merging framework substantially outperforms task arithmetic and weight averaging variants and achieves better performance than LoRA-specific merging methods such as KnOTS-TIES and KnOTS-DARE-TIES \citep{stoica2024model}. Taken together, these results demonstrate that \texttt{PoE-EBM} consistently generalizes across model scales, numbers of merged tasks, and fine-tuning regimes, without requiring architecture-specific modifications.

\begin{table}[t] 
\raggedright
\caption{Average performance comparison when merging fully finetuned ViT-L/14 models and ViT-L/32 models across 13 vision benchmarks. Task-specific fine-tuning accuracy is \textcolor{blue}{blue} - performance accuracy upper bound.}
\begin{tabular}{|l | l | l|}
\hline
\textbf{Method} & \textbf{ViT-L-14} & \textbf{ViT-B-32} \\
\hline

\textbf{Finetuning} 
& \textcolor{blue}{95.93} & \textcolor{blue}{93.20} \\
\hline

Weight Averaging
& 71.08 & 65.22\\
Task Arithmetic (TA)
& 80.57 & 66.38 \\
TA-DARE-TIES
& 81.18 & 67.09 \\
Fisher Merging
& 79.61 & 69.87 \\
\hline
\texttt{PoE-EBM} (Ours)
& \textbf{87.10} & \textbf{77.96} \\
\hline
\end{tabular}
\label{tab: 13 tasks}
\end{table}

\begin{table*}[t]
\raggedright
\caption{Multi-task performance comparison when merging Flan-T5-base models (LoRA-fine-tuned) across 8 GLUE benchmarks. Task-specific fine-tuning accuracy is colored  \textcolor{blue}{blue} which indicates the upper bound on performance accuracy for model merging.~It can be observed that \texttt{PoE-EBM} achieves the best rank among the considered methods across tasks.}
\label{tab: flan-base}
\begin{tabular}{|l| l l l l l l l l | l l|}
\hline
\textbf{Method}
& \textbf{CoLA}
& \textbf{MNLI}
& \textbf{MRPC}
& \textbf{QNLI}
& \textbf{QQP}
& \textbf{RTE}
& \textbf{SST-2}
& \textbf{STS-B}
& \textbf{Average}
& \textbf{Rank} \\
\hline

\textbf{Finetuning}
& \textcolor{blue}{69.13}
& \textcolor{blue}{82.70}
& \textcolor{blue}{85.50}
& \textcolor{blue}{90.90}
& \textcolor{blue}{84.00}
& \textcolor{blue}{84.40}
& \textcolor{blue}{92.90}
& \textcolor{blue}{87.40}
& \textcolor{blue}{84.62}
& \textcolor{blue}{NA} \\
\hline

Weight Averaging
& \textbf{69.70}
& 59.65
& 78.92
& 90.07
& 83.79
& \underline{80.51}
& 91.12
& 71.89
& 78.21
& \underline{2.75} \\

Task Arithmetic
& \underline{69.32}
& 59.00
& 78.68
& \underline{90.13}
& \underline{83.84}
& 79.06
& 91.51
& 72.87
& 78.05
& 3.00 \\

TIES-Merging
& 69.13
& 59.09
& 78.68
& 90.08
& \textbf{83.91}
& 80.14
& 91.51
& 71.85
& 78.05 
& 3.50 \\

Concrete-TA
& 69.22
& 58.21
& 78.19
& 89.97
& 83.60
& 79.42
& 91.63
& \underline{73.24}
& 77.93
& 3.50 \\

DOGE-TA
& 69.12
& \underline{71.92}
& \underline{80.93}
& \textbf{90.32}
& 83.51
& 79.82
& \underline{92.53}
& 71.13
& \underline{79.91}
& 3.13 \\

\hline

\texttt{PoE-EBM} (Ours)
& 69.22
& \textbf{75.33}
& \textbf{82.84}
& 88.87
& 82.91
& \textbf{80.52}
& \textbf{92.55}
& \textbf{82.77}
& \textbf{81.87}
& \textbf{2.38} \\
\bottomrule
\end{tabular}
\end{table*}

\subsection{Model Merging for Language Tasks}
We now present results on eight datasets from the GLUE benchmark using Flan-T5-base and Flan-T5-large models in Tables \ref{tab: flan-base} and \ref{tab: Flan-large}. As noted in \citep{tang2023concrete}, pretrained text-generation models already exhibit strong inherent multitask capabilities, which can limit the extent of gains achievable through task-specific fine-tuning. Despite this, \texttt{PoE-EBM} achieves the best average generation performance across tasks on both model architectures, indicating that our framework remains effective even in regimes where performance improvements are intrinsically constrained.

\textbf{Scaling to larger LLMs.}~We further evaluate \texttt{PoE-EBM} by merging three 7B-parameter models: Vicuna-7B, Llama-2-Coder, and WizardMath, all fine-tuned from the same Llama-2-7B backbone. The merged model is evaluated on GSM8K math problems. Despite operating on multi-billion-parameter models, \texttt{PoE-EBM} requires only \textbf{122 seconds on 2$\times$A100 GPUs}, demonstrating its practical computational cost. In addition, \texttt{PoE-EBM} achieves the highest accuracy among all compared methods, outperforming both individual expert models and strong merging baselines. As shown in Table~\ref{tab:large_scale}, the merged model surpasses the strongest expert (WizardMath) by more than \textbf{4 percentage points}, indicating that \texttt{PoE-EBM} can effectively combine complementary capabilities from specialized models.

\begin{table}[t]
\raggedright
\caption{Accuracy (\%) on GSM8K dataset after merging three 7B-parameter models (Vicuna-7B, WizardMath, and Llama-2-Coder) fine-tuned from a shared Llama-2-7B backbone. \texttt{PoE-EBM} achieves the highest accuracy, outperforming both individual expert models and standard merging baselines (Task Arithmetic, Weight Averaging).
}
\label{tab:large_scale}
\begin{tabular}{|l|c|}
\hline
\textbf{Method} & \textbf{Accuracy (\%)} \\
\hline
Vicuna-7B & 14 \\
WizardMath & 58 \\
Llama-2-Coder & 10 \\
Task Arithmetic & 46 \\
Weight Averaging & 40 \\
\textbf{PoE-EBM} & \textbf{62} \\
\hline
\end{tabular}
\end{table}

\subsection{Analysis and Ablations} \label{sec: ablation}

We provide ablation studies on the averaged directional residuals incurred by \texttt{PoE-EBM} and other merging baselines, the empirical convergence of the fixed point map and runtime analysis of \texttt{PoE-EBM}.~Additional ablation on the performance sensitivity with respect to scaling parameter $\gamma$ is provided in Appendix~\ref{sec: additional results}.

\begin{figure*}[t] 
    \centering
    \includegraphics[width=0.8\linewidth]{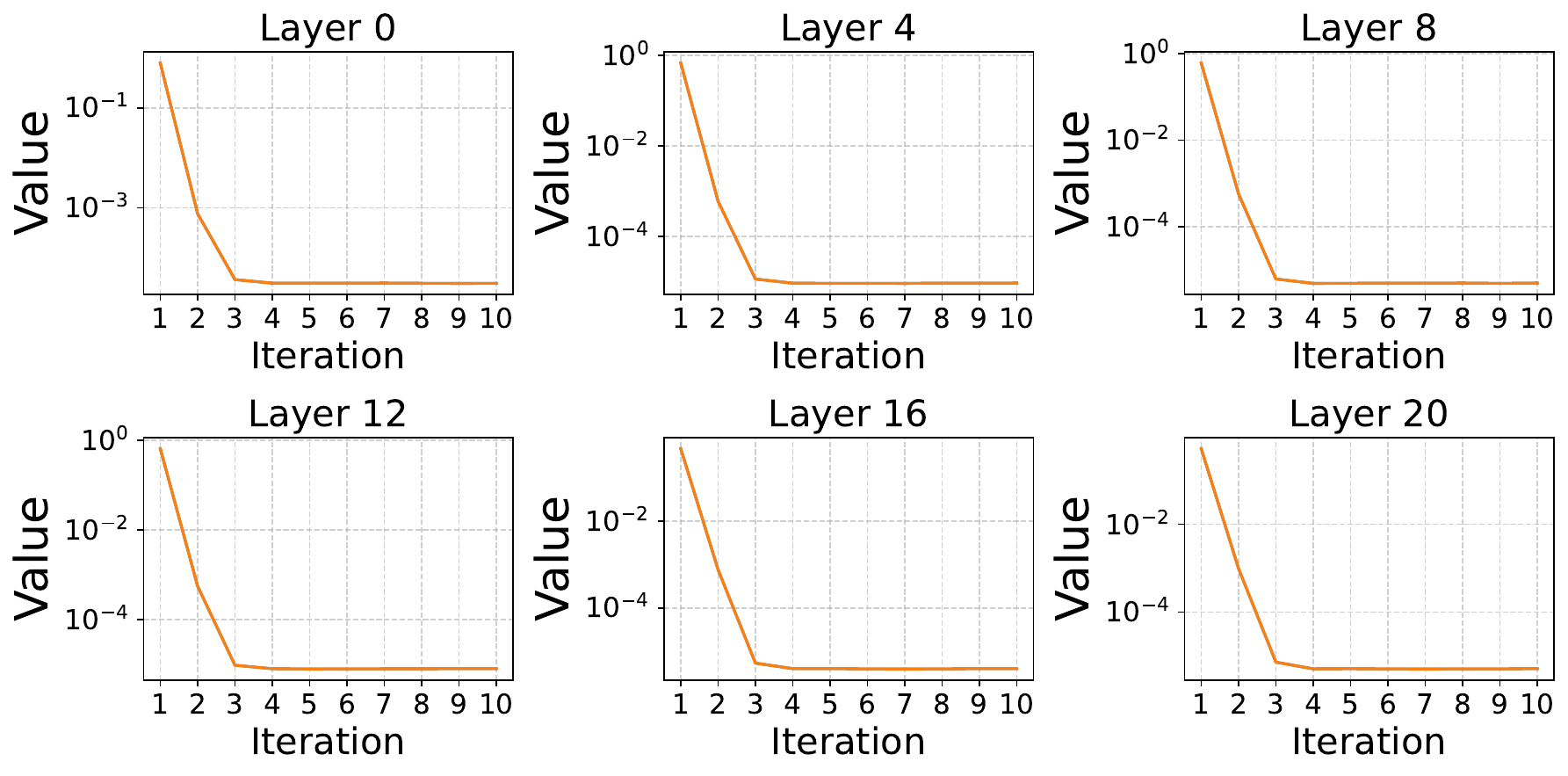}
    \caption{Convergence of the fixed point iteration method (see Algorithm~\ref{alg:probabilistic-merging}) across layers of the merged ViT-L-14 model on 7 vision benchmarks using our \texttt{PoE-EBM} framework with heavy-tailed experts. We visualize $||\boldsymbol{\zeta}^{(k+1)} - \boldsymbol{\zeta}^{(k)}||_2$ across 10 iterations. Our results show rapid convergence to the MAP solution.}
    \label{fig: convergence}
\end{figure*}

\textbf{Residual values across layers.}
To compare directional alignment of different merging methods, we plot the task-average squared residual value 
$$
Mean\Big(||(\boldsymbol{\zeta}^* - \boldsymbol{\theta}_i)^\top \boldsymbol{\theta}_i||^2\Big) \ ,
$$
where $\boldsymbol{\zeta}^*$ is the merged model at each layer of both fully and LoRA fine-tuned ViT-B/32 and ViT-L/14 models, and $Mean()$ represents the average taken across tasks. The results are shown in Figure \ref{fig: residual combined}, respectively. Across all settings, \texttt{PoE-EBM} consistently achieves lower residual magnitude, indicating better directional alignment than other approaches. Notably, other methods exhibit pronounced outlier residuals at certain layers. For example, it can be observed that the residual magnitude is abnormally large at layer 5 of the LoRA-fine-tuned ViT-L/14 in Figure \ref{fig: residual combined}. In contrast, \texttt{PoE-EBM} maintains low residual values across all layers, reflecting more stable and robust alignment behavior.

\textbf{Empirical convergence of the fixed point map.} We empirically validate the convergence of the fixed point iteration algorithm (see Algorithm~\ref{alg:probabilistic-merging}) to compute the MAP point of our \texttt{PoE-EBM}. We report the merging results with ViT-L-14 models on 7 vision tasks. As shown in Figure~\ref{fig: convergence} in Appendix~\ref{sec: additional results}, the sequence $\{\boldsymbol{\zeta}^{(k)}\}$ converges rapidly across all examined layers, with the update magnitude $||\boldsymbol{\zeta}^{(k+1)} - \boldsymbol{\zeta}^{(k)}||_2$ decreasing sharply within the first few iterations and stabilizing thereafter. This convergence pattern indicates that the contracting map admits a solution and the resulting estimate corresponds to the MAP solution under our probabilistic formulation. Importantly, the convergence behavior is uniform across layers, implying that the optimization landscape induced by our model is well-conditioned in practice and that the algorithm is reliable for large-scale model merging. The convergence behavior is observed across layers, implying that the optimization landscape induced by \texttt{PoE-EBM} is well-conditioned in practice.

\textbf{Runtime analysis.} We also evaluate the efficiency of \texttt{PoE-EBM} by measuring the wall-clock runtime of the merging procedure across different tasks and model scales. As reported in Table~\ref{tab:merging-time}, the proposed merging procedure completes in under one minute for all models considered. For the largest model, Flan-T5-large with 0.7B parameters, the full-layer merging process takes only $59.19\mathrm{s}$.~The detailed complexity analysis is deferred to Appendix~\ref{sec: comp analysis}.

\vspace{-5mm}

\section{Conclusion} \label{sec: conclusion} 

We revisit model merging through the lens of MAP inference in the parameter space under a product-of-experts (PoE) energy-based model. We showed that many existing methods can be rediscovered from this new perspective. It enriches the solution space for model merging and at the same time exposes critical limitations of existing work.~In this new view, existing merging methods can be interpreted as imposing a Gaussian structure with light tails on the directional residual between the merged and individual models.~To address this limitation, we introduce Cauchy-based experts that better capture the observed heavy-tailed behavior, resulting in improved performance.~We developed practical algorithms to perform MAP inference on the new PoE design and proved convergence guarantees.

\section*{Acknowledgement}\vspace{-1mm}
This work utilized GPU compute resources at SDSC and ACES through allocation CIS230391 from the Advanced Cyberinfrastructure Coordination Ecosystem:~Services and Support (ACCESS) program~\citep{ACCESS-resource}, which is supported by U.S. National Science Foundation grants $\#$2138259, $\#$2138286, $\#$2138307, $\#$2137603, and $\#$2138296.~Trong Nghia Hoang is supported by the National Science Foundation CAREER Award IIS-2544071.

\bibliography{uai2026-template/uai2026-template/uai2026-template}

\newpage
\appendix
\onecolumn

\section{Proof for Theorem \ref{thm: 1}}
Consider the MAP loss under PoE-EBM \eqref{eq: Cauchy MAP obj}:

\begin{eqnarray}
    \boldsymbol{\zeta}^*
    &=& \arg\min_{\boldsymbol{\zeta}}\ \ \sum_{i=1}^N \log \Big(r_i(\boldsymbol{\zeta})^2+\eta\Big) \ = \ \arg\min_{\boldsymbol{\zeta}}\ \ \sum_{i=1}^N \log \Big([(\boldsymbol{\zeta} - \boldsymbol{\theta}_i)^T\ \boldsymbol{\theta}_i]^2+\eta\Big).
\end{eqnarray}

We write the squared residual term as
\begin{eqnarray}
    r_i(\boldsymbol{\zeta})^2 &=&[(\boldsymbol{\zeta} - \boldsymbol{\theta}_i)^T\ \boldsymbol{\theta}_i]^2 \ = \ (\boldsymbol{\zeta} - \boldsymbol{\theta}_i)^T\ \boldsymbol{\theta}_i\boldsymbol{\theta}_i^\top (\boldsymbol{\zeta} - \boldsymbol{\theta}_i)
\end{eqnarray}
A necessary condition for optimality is that the gradient of the global energy vanishes at $\boldsymbol{\zeta}^*$. This implies the Cauchy score $S^{Cauchy}(\boldsymbol{\zeta}^*)=0$, resulting in the following stationary equation:
\begin{eqnarray}
\sum_{i=1}^N \frac{2r_i(\boldsymbol{\zeta}^*)}{\gamma^2 + r_i(\boldsymbol{\zeta}^*)^2}\boldsymbol{\theta}_i &=& 0.
\end{eqnarray}

\begin{eqnarray}
    S^{Cauchy}(\boldsymbol{\zeta}^*) &=& \sum_{i=1}^N \frac{2\boldsymbol{\theta}_i\boldsymbol{\theta}_i^\top(\boldsymbol{\zeta}^*-\boldsymbol{\theta}_i)}{(\boldsymbol{\zeta}^*-\boldsymbol{\theta}_i)^\top\boldsymbol{\theta}_i\boldsymbol{\theta}_i^\top(\boldsymbol{\zeta}^*-\boldsymbol{\theta}_i)+\gamma} \ = \ 2\sum_{i=1}^N u_i(\boldsymbol{\zeta}^*)\boldsymbol{\theta}_i\boldsymbol{\theta}_i^\top(\boldsymbol{\zeta}^*-\boldsymbol{\theta}_i) \ =\ 0.
\end{eqnarray}
Solving for $\boldsymbol{\zeta}^*$ thus reveals the fixed-point equation as desired,

\begin{eqnarray}
    \boldsymbol{\zeta}^*
    &=&
    \left(
    \sum_{i=1}^N
    u_i(\boldsymbol{\zeta}^*)\,
    \boldsymbol{\theta}_i \boldsymbol{\theta}_i^\top
    \right)^{\dagger}
    \left(
    \sum_{i=1}^N
    u_i(\boldsymbol{\zeta}^*)\,
    \boldsymbol{\theta}_i \boldsymbol{\theta}_i^\top \boldsymbol{\theta}_i
    \right).
\end{eqnarray}

\section{Proof for Theorem \ref{theo: contract}} \label{sec: proof contract}
We prove that the fixed-point mapping
\begin{eqnarray}
    F(\boldsymbol{\zeta}^*) &=& \left(
    \sum_{i=1}^N
    u_i(\boldsymbol{\zeta}^*)\,
    \boldsymbol{\theta}_i \boldsymbol{\theta}_i^\top + \eta \boldsymbol{I}
    \right)^{-1}
    \left(
    \sum_{i=1}^N
    u_i(\boldsymbol{\zeta}^*)\,
    \boldsymbol{\theta}_i \boldsymbol{\theta}_i^\top \boldsymbol{\theta}_i
    \right) \ \ \text{where}\ \ 
\end{eqnarray}
\[
u_i(\boldsymbol{\zeta}^*) \ \ =\ \ \frac{1}{\big[(\boldsymbol{\zeta}^*-\boldsymbol{\theta}_i)^\top \boldsymbol{\theta}_i\big]^2 + \gamma} \ \ \triangleq\ \  \frac{1}{r_i(\boldsymbol{\zeta^\ast})^2 + \gamma},
\qquad \eta>0,\;\gamma>0,
\]
is a contractive map under Assumptions \ref{ass: bounded tv} and \ref{ass: local}, i.e.
\begin{eqnarray} \label{eq: banach cond}
\|F(\boldsymbol{\zeta}^*) - F(\boldsymbol{\nu})\| &\leq& L\|\boldsymbol{\zeta}^*-\boldsymbol{\nu}\| \ \ \text{where} \ \ 0\ \ <\ \ L\ \ <\ \ 1.
\end{eqnarray}
Choosing $\boldsymbol{\nu} = \boldsymbol{\zeta}^{(k)}$ and factoring in that $\boldsymbol{\zeta}^{(k+1)} = F(\boldsymbol{\zeta}^{(k)})$ then leads to the desired result.~For notational ease, we denote
$\boldsymbol{H}(\boldsymbol{\zeta}^*) = \sum_{i=1}^N u_i(\boldsymbol{\zeta}^*) \boldsymbol{\theta}_i\boldsymbol{\theta}_i^\top + \eta \boldsymbol{I}$ and
$\boldsymbol{b}(\boldsymbol{\zeta}^*) = \sum_{i=1}^N u_i(\boldsymbol{\zeta}^*) \boldsymbol{\theta}_i\boldsymbol{\theta}_i^\top \boldsymbol{\theta}_i$. This results in 
\begin{eqnarray} \label{eq: mapping F}
F(\boldsymbol{\zeta}^*) &=& \Big(\boldsymbol{H}(\boldsymbol{\zeta}^*)\Big)^{-1} \boldsymbol{b}(\boldsymbol{\zeta}^*) \ .
\end{eqnarray}

We first need the following lemma for computing the directional gradient of $F$.
\begin{lemma} \label{lemma:gradient F}
    Given the mapping $F$ defined in \eqref{eq: mapping F}. Pick an arbitrary weight $\boldsymbol{\zeta}^*$ and direction $\boldsymbol{h}$. Then the directional gradient of $F$ at $\boldsymbol{\zeta}^*$ in the direction $\boldsymbol{h}$ is given by 
    \begin{eqnarray}
        DF(\boldsymbol{\zeta}^*)_{\boldsymbol{h}} &=& -2 \boldsymbol{H}^{-1} \sum_{i=1}^N \frac{r_i(\boldsymbol{\zeta}^*)}{(r_i(\boldsymbol{\zeta}^*)^2+\gamma)^2}(\boldsymbol{\theta}_i^\top\boldsymbol{h})\Big(\boldsymbol{\theta}_i\boldsymbol{\theta}_i^\top\boldsymbol{\theta}_i - \boldsymbol{\theta}_i\boldsymbol{\theta}_i^\top F(\boldsymbol{\zeta}^*)\Big).
    \end{eqnarray}
\end{lemma}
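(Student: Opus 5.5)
We differentiate the identity $\boldsymbol{H}(\boldsymbol{\zeta})F(\boldsymbol{\zeta}) = \boldsymbol{b}(\boldsymbol{\zeta})$ along $\boldsymbol{h}$, rather than differentiating the inverse directly. Both $\boldsymbol{H}$ and $\boldsymbol{b}$ depend on $\boldsymbol{\zeta}$ only through the scalar weights $u_i(\boldsymbol{\zeta}) = (r_i(\boldsymbol{\zeta})^2+\gamma)^{-1}$. The matrices $\boldsymbol{\theta}_i\boldsymbol{\theta}_i^\top$, the vectors $\boldsymbol{\theta}_i\boldsymbol{\theta}_i^\top\boldsymbol{\theta}_i$ and $\eta\boldsymbol{I}$ are constant. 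So the whole computation reduces to the chain rule on the $u_i$.

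\textbf{Step 1: the scalar weights.} Since $r_i(\boldsymbol{\zeta}) = (\boldsymbol{\zeta}-\boldsymbol{\theta}_i)^\top\boldsymbol{\theta}_i$ is affine, $Dr_i(\boldsymbol{\zeta})_{\boldsymbol{h}} = \boldsymbol{\theta}_i^\top\boldsymbol{h}$. Hence
\[
Du_i(\boldsymbol{\zeta})_{\boldsymbol{h}} = -\frac{2r_i(\boldsymbol{\zeta})}{(r_i(\boldsymbol{\zeta})^2+\gamma)^2}\,(\boldsymbol{\theta}_i^\top\boldsymbol{h}).
\]
This derivative exists everywhere because $\gamma>0$ keeps the denominator positive.

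\textbf{Step 2: the matrix and vector terms.} Differentiating term by term gives
\[
D\boldsymbol{H}_{\boldsymbol{h}} = \sum_i Du_i(\boldsymbol{\zeta})_{\boldsymbol{h}}\,\boldsymbol{\theta}_i\boldsymbol{\theta}_i^\top,
\qquad
D\boldsymbol{b}_{\boldsymbol{h}} = \sum_i Du_i(\boldsymbol{\zeta})_{\boldsymbol{h}}\,\boldsymbol{\theta}_i\boldsymbol{\theta}_i^\top\boldsymbol{\theta}_i.
\]

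\textbf{Step 3: differentiate the identity and solve.} Differentiating $\boldsymbol{H}F=\boldsymbol{b}$ with the product rule yields $D\boldsymbol{H}_{\boldsymbol{h}}\,F + \boldsymbol{H}\,DF_{\boldsymbol{h}} = D\boldsymbol{b}_{\boldsymbol{h}}$. Solving for $DF_{\boldsymbol{h}}$ gives
\[
DF_{\boldsymbol{h}} = \boldsymbol{H}^{-1}\big(D\boldsymbol{b}_{\boldsymbol{h}} - D\boldsymbol{H}_{\boldsymbol{h}}F\big).
\]
Equivalently, one can use $D(\boldsymbol{H}^{-1})_{\boldsymbol{h}} = -\boldsymbol{H}^{-1}D\boldsymbol{H}_{\boldsymbol{h}}\boldsymbol{H}^{-1}$ together with $\boldsymbol{H}^{-1}\boldsymbol{b}=F$. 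Substituting Step 2 and grouping by $i$ gives
\[
DF_{\boldsymbol{h}} = \boldsymbol{H}^{-1}\sum_i Du_i(\boldsymbol{\zeta})_{\boldsymbol{h}}\big(\boldsymbol{\theta}_i\boldsymbol{\theta}_i^\top\boldsymbol{\theta}_i - \boldsymbol{\theta}_i\boldsymbol{\theta}_i^\top F(\boldsymbol{\zeta})\big).
\]
Inserting the expression from Step 1 and pulling out the factor $-2$ gives exactly the stated formula, evaluated at $\boldsymbol{\zeta}^*$.

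\textbf{Justification and bookkeeping.} $\boldsymbol{H}$ is invertible everywhere, since it is the sum of a positive semidefinite matrix and $\eta\boldsymbol{I}$ with $\eta>0$. Its entries are smooth in $\boldsymbol{\zeta}$, so $\boldsymbol{H}^{-1}$ is smooth and all the steps above are legitimate. The only real care needed is bookkeeping. The $\gamma$ convention must match the lemma, i.e.\ $r_i^2+\gamma$ as in this appendix rather than $r_i^2+\gamma^2$. The ordering of the scalar factor $\boldsymbol{\theta}_i^\top\boldsymbol{h}$ must also be kept straight; it commutes freely because it is a scalar.
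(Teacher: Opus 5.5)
Your proposal is correct and is essentially the paper's argument. The paper differentiates $F_t=\boldsymbol{H}_t^{-1}\boldsymbol{b}_t$ via the identity $D(\boldsymbol{H}^{-1})_{\boldsymbol{h}}=-\boldsymbol{H}^{-1}D\boldsymbol{H}_{\boldsymbol{h}}\boldsymbol{H}^{-1}$, which is equivalent to your implicit differentiation of $\boldsymbol{H}F=\boldsymbol{b}$, and then substitutes the same expressions for $D\boldsymbol{H}_{\boldsymbol{h}}$ and $D\boldsymbol{b}_{\boldsymbol{h}}$. Your sign bookkeeping is cleaner: the paper writes $DF_{\boldsymbol{h}}=-\boldsymbol{H}^{-1}[D\boldsymbol{b}_{\boldsymbol{h}}-D\boldsymbol{H}_{\boldsymbol{h}}F]$ and $\nabla u_i=+\frac{2r_i}{(r_i^2+\gamma)^2}\boldsymbol{\theta}_i$, two sign slips that happen to cancel, whereas your $Du_i$ correctly carries the minus sign.
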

\begin{proof}
    Applying the definition of directional derivative:
    \begin{eqnarray*}
    DF(\boldsymbol{\zeta}^*)_{\boldsymbol{h}} &=& \frac{d}{dt}F(\boldsymbol{\zeta}^*+t\boldsymbol{h})\Big|_{t=0}.
    \end{eqnarray*}
    Denote $\boldsymbol{H}_t \triangleq \boldsymbol{H}(\boldsymbol{\zeta}^*+t\boldsymbol{h})$ and $\boldsymbol{b}_t \triangleq \boldsymbol{b}(\boldsymbol{\zeta}^*+t\boldsymbol{h})$. Then $F_t \triangleq F(\boldsymbol{\zeta}^*+t\boldsymbol{h}) = \boldsymbol{H}_t^{-1} \boldsymbol{b}_t$. Applying chain rule, we have
    \begin{eqnarray} \label{eq: 30 dF_t}
        \frac{dF_t}{dt} &=& \frac{d}{dt} \boldsymbol{H}^{-1}_t \boldsymbol{b}_t \ + \ \boldsymbol{H}^{-1}_t \frac{d\boldsymbol{b}_t}{dt} \ = \ -\boldsymbol{H}_t^{-1} \frac{d\boldsymbol{H}_t}{dt}\underbrace{\boldsymbol{H}_t^{-1} \boldsymbol{b}_t}_{F_t} + \boldsymbol{H}_t^{-1}\frac{d\boldsymbol{b}_t}{dt} \ ,
    \end{eqnarray}
    where we apply the matrix inverse derivative identity. At $t=0$, \eqref{eq: 30 dF_t} becomes
    \begin{eqnarray} \label{eq: 31 DF}
DF(\boldsymbol{\zeta}^*)_{\boldsymbol{h}} &=& -\boldsymbol{H}(\boldsymbol{\zeta}^*)^{-1} [D\boldsymbol{b}(\boldsymbol{\zeta}^*)_{\boldsymbol{h}} - D\boldsymbol{H}(\boldsymbol{\zeta}^*)_{\boldsymbol{h}}F(\boldsymbol{\zeta}^*)],
    \end{eqnarray}
    where 
    \begin{eqnarray}\label{eq:direct H and b}
          D\boldsymbol{b}(\boldsymbol{\zeta}^*)_{\boldsymbol{h}} &=& \sum_{i=1}^N (\nabla u_i(\boldsymbol{\zeta}^*)^\top \boldsymbol{h}) \boldsymbol{\theta}_i\boldsymbol{\theta}_i^\top \boldsymbol{\theta}_i \ = \ \sum_{i=1}^N\frac{2 r_i(\boldsymbol{\zeta}^*)}{(r_i(\boldsymbol{\zeta}^*)^2+\gamma)^2}(\boldsymbol{\theta}_i^\top\boldsymbol{h}) \boldsymbol{\theta}_i\boldsymbol{\theta}_i^\top\boldsymbol{\theta}_i, \nonumber\\
        D\boldsymbol{H}(\boldsymbol{\zeta}^*)_{\boldsymbol{h}} &=& \sum_{i=1}^N (\nabla u_i(\boldsymbol{\zeta}^*)^\top \boldsymbol{h}) \boldsymbol{\theta}_i\boldsymbol{\theta}_i^\top \ = \ \sum_{i=1}^N\frac{2 r_i(\boldsymbol{\zeta}^*)}{(r_i(\boldsymbol{\zeta}^*)^2+\gamma)^2}(\boldsymbol{\theta}_i^\top\boldsymbol{h}) \boldsymbol{\theta}_i\boldsymbol{\theta}_i^\top.
    \end{eqnarray}
    Substituting \eqref{eq:direct H and b} to \eqref{eq: 31 DF} thus completes the proof.
\end{proof}

It is now sufficient to show that the directional gradient $\|DF(\boldsymbol{\zeta}^*)\|$ of $F$ in an arbitrary direction $\boldsymbol{h}$ in the neighborhood of $\boldsymbol{\zeta}^*$ is less than 1 to show local contraction of $F$~\citep{kreyszig1991introductory}.

First, we compute the gradient of $u_i(\boldsymbol{\zeta}^*)$ as:

\begin{eqnarray} \label{eq: nabla u bound}
    \nabla u_i(\boldsymbol{\zeta}^*) &=&  \frac{2 r_i(\boldsymbol{\zeta}^*)}{(r_i(\boldsymbol{\zeta}^*)^2 + \gamma)^2}\cdot \boldsymbol{\theta}_i
\end{eqnarray}

Next, we define $\boldsymbol{h}=\boldsymbol{\nu} - \boldsymbol{\zeta}^*$ for arbitrary weights $\boldsymbol{\zeta}^*$ and $\boldsymbol{\nu}$ such that $\|\boldsymbol{h}\|=1$ (as per Assumption \ref{ass: local}). We bound the norm of the directional gradients  $DF(\boldsymbol{\zeta}^*)_{\boldsymbol{h}}$ in direction $\mathbf{h}$. Using Lemma \ref{lemma:gradient F}, we can compute $\nabla F(\boldsymbol{\zeta}^*)_{\boldsymbol{h}}$ as
\begin{eqnarray}
DF(\boldsymbol{\zeta}^*)_{\boldsymbol{h}} &=& -2 \boldsymbol{H}^{-1} \sum_{i=1}^N \underbrace{ \frac{r_i(\boldsymbol{\zeta}^*)}{(r_i(\boldsymbol{\zeta}^*)^2+\gamma)^2}\Big(\boldsymbol{\theta}_i\boldsymbol{\theta}_i^\top\boldsymbol{\theta}_i - \boldsymbol{\theta}_i\boldsymbol{\theta}_i^\top F(\boldsymbol{\zeta}^*)\Big)}_{\triangleq\boldsymbol{J}_i}(\boldsymbol{\theta}_i^\top\boldsymbol{h}).
\end{eqnarray}
Therefore, $DF(\boldsymbol{\zeta}^*)_{\boldsymbol{h}}$ is bounded by
\begin{eqnarray}
\|DF(\boldsymbol{\zeta}^*)_{\boldsymbol{h}}\| &=& 2 \Bigg\|\sum_{i=1}^N \boldsymbol{H}^{-1} \boldsymbol{J}_i (\boldsymbol{\theta}_i^\top\boldsymbol{h})\Bigg\| \ \ \leq \ \ 2\sum_{i=1}^N  \| \boldsymbol{H}^{-1} \boldsymbol{J}_i \boldsymbol{\theta}_i^\top\| \ \ \leq \ \ 2M \sum_{i=1}^N \|\boldsymbol{H}^{-1}\boldsymbol{J}_i\| \ ,
\end{eqnarray}
where the first inequality follows from triangle inequality and $\|\boldsymbol{h}\|=1$, the second inequality follows from Assumption \ref{ass: bounded tv}. We empirically show in Figure \ref{fig:Lips_const} that this upper bound is less than 1 in our experiment settings, making $F$ a contracting map around the neighborhood of $\boldsymbol{\zeta}^*$. 

\begin{figure}[t]
    \centering
    \includegraphics[width=0.4\linewidth]{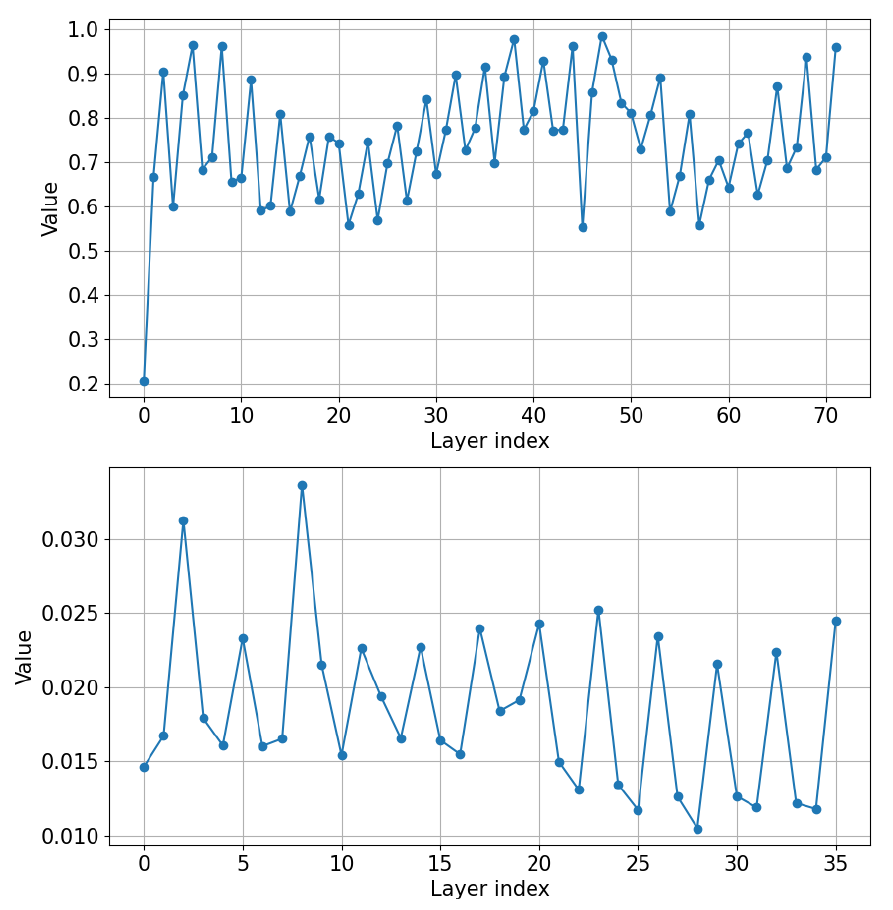}
    \caption{Lipschitz constant values of $F$ across multiple layers of our \texttt{PoE-EBM} when merging 7 fullfinetuned ViT models. Top: ViT-L-14. Bottom: ViT-B-32. The Lipschitz constant is consistently less than 1, indicating that $F$ is a contracting map, ensuring convergence of our algorithm \ref{alg:probabilistic-merging} (see Figure \ref{fig: convergence}).}
    \label{fig:Lips_const}
\end{figure}

\section{Complexity Analysis of Algorithm \ref{alg:probabilistic-merging}.} \label{sec: comp analysis}
Let $\boldsymbol{\theta}_i \in \mathbb{R}^{d\times d}$ be a fine-tuning module. Outer loop over maximum iterations (line 4-12) is executed $T$ times. For each outer iteration, the inner loop (line 5-8) over tasks is executed $N$ times. 

Inner loop complexity:
\begin{enumerate}
    \item Residual computation (line 6): $\mathcal{O}(d^2)$.
    \item Task weight computation (line 7): $\mathcal{O}(1)$.
\end{enumerate}
After the inner loop
\begin{enumerate}
    \item Compute $\boldsymbol{H}$ and $\boldsymbol{b}$ (line 9): $\mathcal{O}(Nd^2)$
    \item Update merged fine-tuning module (line 11): $\mathcal{O}(d^3)$.
\end{enumerate}
Therefore, total complexity is $\mathcal{O}\Bigg(T\Big(N(d^2 + 1) + Nd^2 + d^3\Big)\Bigg) \ \ =\ \  \mathcal{O}(TNd^3)$

\section{Insights into Cauchy score~\ref{eq: score cauchy}} \label{sec: Cauchy insights}
\subsection{Gaussian energy and base score}
 For Gaussian experts \eqref{eq: Gaussian energy} with the precision matrix chosen as $\boldsymbol{\Sigma}_i^{-1} = \frac{1}{\gamma^2} \boldsymbol{\theta}_i\boldsymbol{\theta}_i^\top$, the corresponding energy function, which we refer to as the \textit{base energy}, for the $i$-th task  has the following form:
    \begin{eqnarray}
        E_i^{base}(\boldsymbol{\zeta}) \ \ \triangleq\ \ \frac{1}{2\gamma^2}(\boldsymbol{\zeta}-\boldsymbol{\theta}_i)^\top \boldsymbol{\theta}_i\boldsymbol{\theta}_i^\top(\boldsymbol{\zeta}-\boldsymbol{\theta}_i) 
        &=&  \frac{1}{2\gamma^2}r_i(\boldsymbol{\zeta})^2\ .
    \end{eqnarray}
The global base energy is simply the sum over task energies $E^{base}(\boldsymbol{\zeta}) = \sum_{i=1}^N \frac{1}{2\gamma^2} r_i(\boldsymbol{\zeta})^2$,
which admits the following:
\begin{eqnarray} \label{eq: base score}
    \hspace{-17mm}S^{base}(\boldsymbol{\zeta}) &\triangleq& - \nabla_{\boldsymbol{\zeta}} E^{base} \ =\ - \sum_{i=1}^N \frac{r_i(\boldsymbol{\zeta})}{\gamma^2} \boldsymbol{\theta}_i \ .
\end{eqnarray}

\subsection{Cauchy experts as additive robust guidance}
To understand of the effect of heavy-tail experts, we compare the Cauchy score \eqref{eq: score cauchy} with the base score \eqref{eq: base score}. The Cauchy score \eqref{eq: score cauchy} can be expressed as
\begin{eqnarray*}
    S^{Cauchy} &=& S^{base} \ +\  G(\boldsymbol{\zeta})\ ,
\end{eqnarray*}
where we define the guidance term as:
\begin{eqnarray*}
\hspace{-17mm}    G(\boldsymbol{\zeta}) &\triangleq& \sum_{i=1}^N \Bigg(\frac{r_i(\boldsymbol{\zeta})}{\gamma^2} \ -\ \frac{2r_i(\boldsymbol{\zeta})}{\gamma^2 + r_i(\boldsymbol{\zeta})^2}\Bigg)\boldsymbol{\theta}_i \ .
\end{eqnarray*}
Simplifying the expression yields:
\begin{eqnarray} \label{eq: guidance}
    \hspace{-27mm}G(\boldsymbol{\zeta}) &=& \sum_{i=1}^N \frac{r_i(\boldsymbol{\zeta})\Big(r_i(\boldsymbol{\zeta})^2 - \gamma^2\Big)}{\gamma^2\Big(\gamma^2+r_i(\boldsymbol{\zeta})^2\Big)}\boldsymbol{\theta}_i \ .
\end{eqnarray}
Intuitively, $G(\boldsymbol{\zeta})$ acts as a residual-dependent adjustment that pushes $\boldsymbol{\zeta}$ away from directions with large $|r_i(\boldsymbol{\zeta})|$. In contrast, the base Gaussian score \eqref{eq: base score} amplifies large residuals linearly, pulling the solution toward conflicts regime. The Cauchy
guidance thus provides automatic robustness where it tempers the influence of misaligned tasks while still encouraging alignment
in well-aligned directions.

\section{Limitations}
\texttt{PoE-EBM} currently assumes offline model merging with simultaneous access to all task models and a shared model architecture. Moreover, our formulation focuses on Cauchy experts as the underlying heavy-tailed distribution. Extending the framework to continual merging, alternative heavy-tailed experts, heterogeneous architectures, and multi-modal settings remains potential scopes for future work.

\section{Additional Experiment details} \label{sec: exp details}
\begin{table}[t] 
\centering
\caption{Hyperparameter settings $(\lambda, \gamma, \eta)$ for \texttt{PoE-EBM} when used to merge fine-tuned models in vision and NLP benchmarks.~The numerical algorithm (Section~\ref{sec: merging algo}) used to compute the MAP of \texttt{PoE-EBM} model is configured with convergence tolerance parameter $10^{-5}$ in all scenarios.}
\begin{tabular}{|l l l l l l|}
\hline
\textbf{Domain} & \textbf{Training} & \textbf{Model} & \textbf{Scale $\lambda$} & \boldmath$\gamma$ & $\boldsymbol{\eta}$ \\
\hline
Vision & FFT  & ViT-L/14      & 1.0  & 0.01 & $10^{-3}$ \\
Vision & FFT  & ViT-B/32      & 1.0  & 0.3 & $10^{-3}$ \\
Vision & LoRA & ViT-L/14      & 0.25 & $0.01$ & $10^{-3}$\\
\hline
NLP    & LoRA & Flan-T5-base  & 1.0  & $10^{-3}$ & $10^{-3}$\\
NLP    & LoRA & Flan-T5-large & 1.0  & $10^{-2}$ & $10^{-3}$\\
\hline
\end{tabular} \label{tab: hypers}
\end{table}

\textbf{Tasks.} We conduct our experiments on vision and natural language processing (NLP) tasks. The downstream vision tasks contain the 7-task benchmark: MNIST \citep{lecun1998mnist}, SVHN \citep{netzer2011reading}, Stanford Cars \citep{krause20133d}, DTD \citep{cimpoi2014describing}, GTRSB \citep{stallkamp2011german}, EuroSAT \citep{helber2019eurosat}, Resisc45 \citep{cheng2017remote}. We also include an additional 6 datasets to create a more challenging 13-task benchmark: CIFAR10, CIFAR100 \citep{krizhevsky2009learning}, FashionMNIST \citep{xiao2017fashion}, Flowers102 \citep{nilsback2008automated}, Food \citep{bossard14} and Oxford-IIIT Pet. For NLP task, we use 8 datasets from the GLUE benchmark \citep{wang2018glue}, icluding CoLA, MNLI, MRPC, QNLI, QQP, RTE, SST2 and STSB.

\textbf{Models.} For vision experiments, we leverage pretrained ViT-B/32 and ViT-B/14 models from CLIP \cite{radford2021learning}.
We consider merging models in both fully-finetuned and LoRA fine-tuned ~\citep{hu2022lora} settings. We use the checkpoints provided by~\citet{ilharco2022editing} for fully-finetuned models. The LoRA-finetuned version of these models are provided by~\citet{stoica2024model}. For NLP tasks, we merge LoRA-finetuned Flan-T5-base models and Flan-T5-large models whose checkpoints are provided by~\citet{wei2025modeling}.

\textbf{Metrics.} We report absolute accuracy of the merging methods as well as those of individual fine-tuned models. Following~\citep{ilharco2022editing}, we also report the "normalized accuracy", i.e. the ratio between absolute accuracy of the merged model on task $i$-th and the finetuned model accuracy on the same task. Normalized accuracy shows how close the merged model performs in relative to the finetuned model for each task. Additional experiment details are provided in Appendix~\ref{sec: exp details}.

\textbf{Hyperparameters.} All hyperparameters are selected via extensive grid search on validation performance for each architecture and training regime. We use the same convergence criterion across all experiments and report the best-performing configuration for each setting. Convergence criteria: For all experiments with \texttt{PoE-EBM}, fixed-point iterations are terminated when the norm of the difference between two iterates is smaller or equals to $10^{-5}$. The conditioning parameter $\eta$ is fixed to $10^{-3}$. Other hyperparameter configurations for using \texttt{PoE-EBM} on vision and NLP tasks are reported in Table~\ref{tab: hypers}.

\begin{table*}[t]
\centering
\caption{Average $\ell_2$ norms of full fine-tuning task vectors \(\boldsymbol{\theta}_i\) across seven vision tasks, together with their magnitudes relative to the pretrained model \(\boldsymbol{W}_0\). All values are averaged across layers. The consistently small ratios (<1\%) indicate that task-specific updates remain localized perturbations of the pretrained model, supporting the fine-tuning parameterization adopted throughout this work.
}
\label{tab:l2_ratio}
\begin{tabular}{|l|l| l l l l l l l|}
\hline
Backbone 
& $\ell_2$-norm \& ratio of $\ell_2$-norms
& MNIST 
& SVHN 
& Cars 
& DTD 
& GTSRB 
& EuroSAT 
& RESISC45 \\
\hline

ViT-B-32 & $\|\boldsymbol{\theta_i}\|_2$
& 0.124
 & 0.136 & 0.140 & 0.154 & 0.117 & 0.116 & 0.128 \\
 
& $\|\boldsymbol{\theta}_i\|_2/\|\boldsymbol{W}_0\|_2$ 
& 0.65\% & 0.71\% & 0.73\% & 0.81\% & 0.61\% & 0.61\% & 0.67\% \\

\hline
ViT-L-14 & $\|\boldsymbol{\theta_i}\|_2$
& 0.128 & 0.149 & 0.156 & 0.170 & 0.117 & 0.134 & 0.152 \\

& $\|\boldsymbol{\theta}_i\|_2/\|\boldsymbol{W}_0\|_2$ 
& 0.60\% & 0.70\% & 0.73\% & 0.80\% & 0.55\% & 0.63\% & 0.80\% \\

\hline
\end{tabular}
\end{table*}

\section{Additional results} \label{sec: additional results}
We present additional experimental results and ablation studies in this section. The normalized accuracy for fully fine-tuned ViT-B/32 and ViT-B/14 models is reported in Tables \ref{tab:vitb32_results_normalized} and \ref{tab:vitl14_results_normalized}, respectively. Results for merging LoRA-finetuned and fully fine-tuned ViT-L/14 models on the 7-task vision benchmark are shown in Tables \ref{tab:vitl14lora_results} and \ref{tab:vitl14_results}. We report the merging results on the 13 vision benchmarks in Table \ref{tab: 13 tasks}. Table \ref{tab: Flan-large} summarizes performance on 8 GLUE datasets when merging LoRA-finetuned Flan-T5-large models. Across all settings, \texttt{PoE-EBM} consistently outperforms the compared baselines, including KnOTS \citep{stoica2024model}, which is specifically designed for merging LoRA-finetuned models.~We also provide additional empirical results demonstrating the residuals' heavy-tail behavior when merging Flan-T5 models and ViTs which closely aligns with the Cauchy distribution in Figs.~\ref{fig:vit-heavy} and~\ref{fig:flan-heavy}.

\begin{figure}[t]
    \centering
    \includegraphics[width=\linewidth]{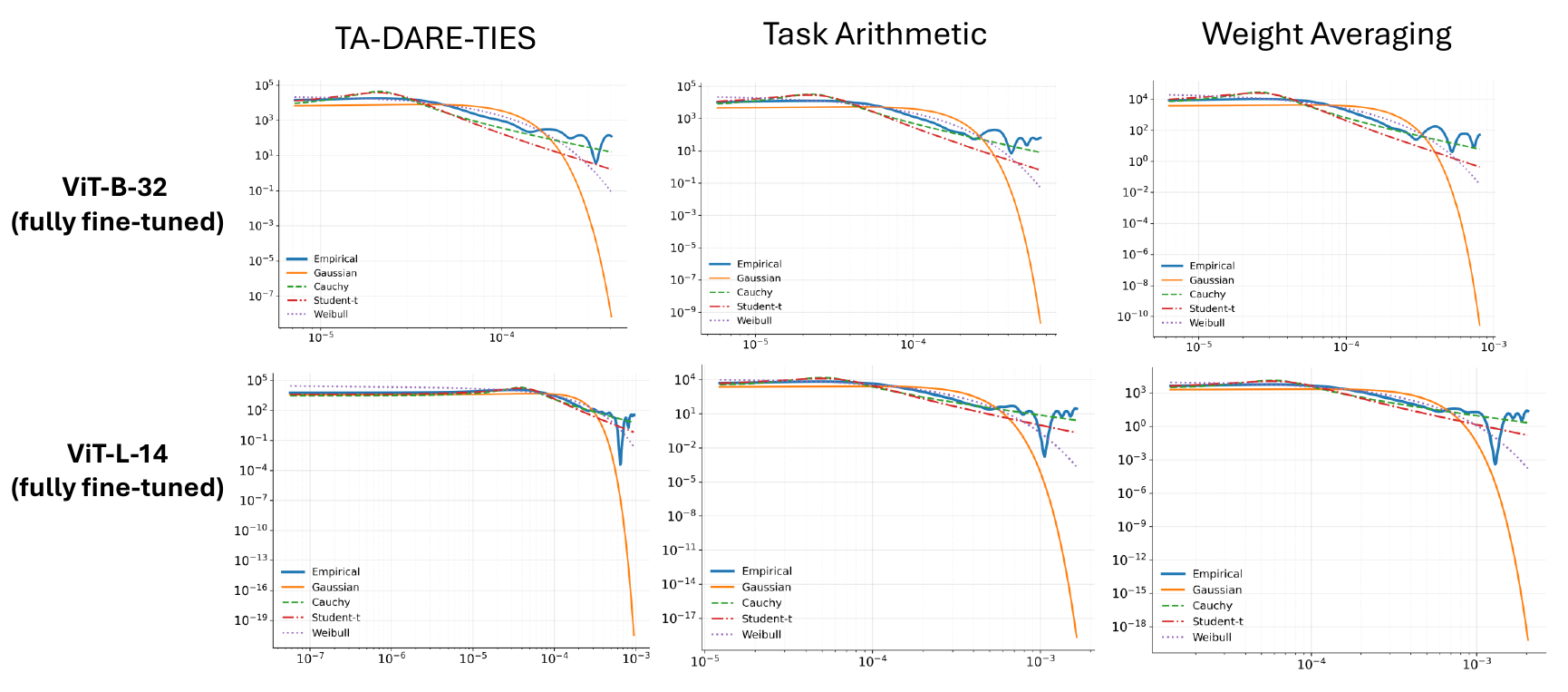}
    \caption{Empirical distributions of directional residuals \(r\) (see Eq.~\ref{eq:dir-res}) produced by different merging methods when merging seven fully fine-tuned ViT models.~Across all methods, the observed tails are substantially heavier than those predicted by light-tailed models and are most accurately captured by a Cauchy distribution among the distributions considered.}
    \label{fig:vit-heavy}
\end{figure}

\begin{figure}[t]
    \centering
    \includegraphics[width=\linewidth]{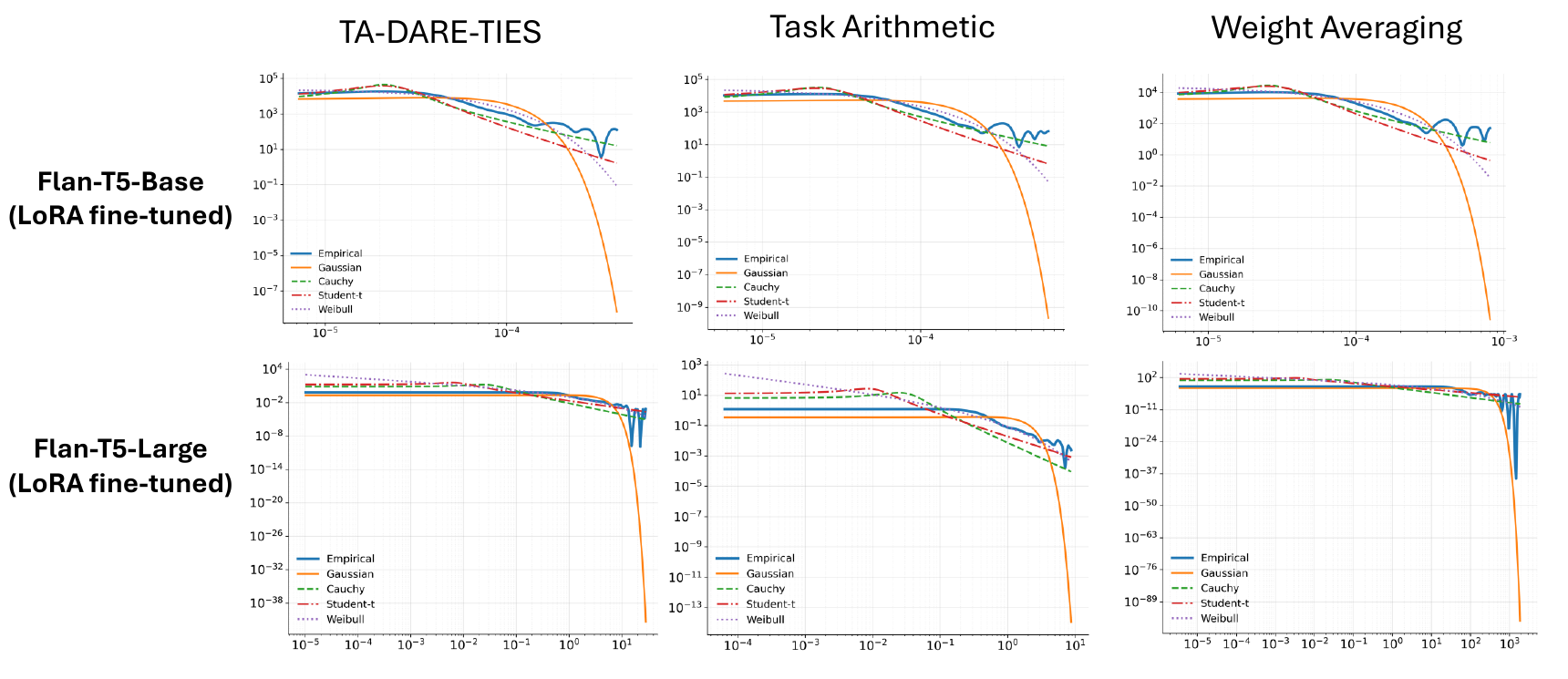}
    \caption{Empirical distributions of directional residuals \(r\) (see Eq.~\ref{eq:dir-res}) produced by different merging methods when merging 8 LoRA fine-tuned Flan-T5 models.}
    \label{fig:flan-heavy}
\end{figure}

\textbf{Ablation on the covariance scale parameter.} We study the effect of the scale parameter $\gamma$ in \eqref{eq:energy cauchy} on the averaged absolute accuracy of our merging algorithm on both ViT-B-32 and ViT-L-14.  We vary the value of $\gamma$, which controls the computation of the task-wise weights $u_i(\cdot)$, over the range $[10^{-3}, 3]$ and plot the corresponding accuracies in Figure \ref{fig: epsilon}. We observe that the performance is relatively stable across a wide range of $\gamma$ values for both ViT models. For ViT-B-32, accuracy peaks at $\gamma=0.3$ and gradually decreases as $\gamma$ increases to $3$. Even at $\gamma=2$, \texttt{PoE-EBM} still outperforms all the baselines compared with accuracy $83.36\%$. For ViT-L-14, the performance remains relatively stable as the accuracy gradually increases with smaller values of $\gamma$. However, at extremely small values (e.g., $\gamma = 10^{-3}$), performance collapses, indicating numerical instability and over-sensitivity in the weight $u_i$'s computation. These results suggest that moderate $\gamma$ values result in stable performance of our merging algorithm, while overly small scales can be detrimental, particularly for larger models.

\begin{figure}[t]
    \centering
    \includegraphics[width=0.7\linewidth]{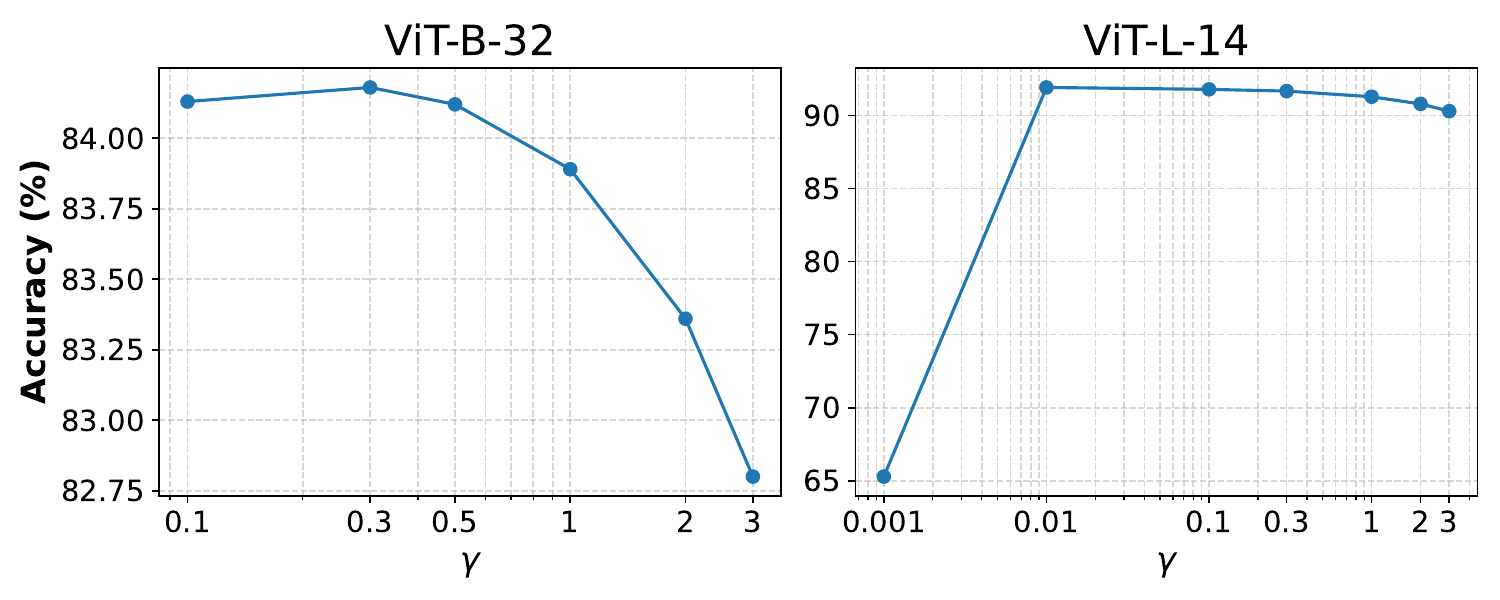}
    \caption{Effect of $\gamma$ values on the merging accuracies of \texttt{PoE-EBM} on the performance accuracy when merging 7 full fine-tuned ViT models.}
    \label{fig: epsilon}
\end{figure}

\begin{table}[t]
\centering
\caption{Reported processing time for our model merging algorithm (see Algorithm~\ref{alg:probabilistic-merging}) across NLP and vision benchmarks.}
\label{tab:merging-time}
\begin{tabular}{|l l l l|}
\hline
\textbf{Domain} & \textbf{Benchmark} & \textbf{Model} & \textbf{Time (s)} \\
\hline
NLP & GLUE (8 tasks) & Flan-T5-base  & 29.26 \\
NLP & GLUE (8 tasks) & Flan-T5-large & 59.19 \\
\hline
Vision & 7 tasks  & ViT-B/32  & 12.10 \\
Vision & 7 tasks  & ViT-L/14  & 42.93 \\
\hline
Vision & 13 tasks & ViT-B/32  & 16.39 \\
Vision & 13 tasks & ViT-L/14  & 50.79 \\
\hline
\end{tabular}
\end{table}

\vspace{-5mm}

\section{Additional Related Works}
To resolve task interference in task arithmetic~\citep{ilharco2022editing}, TIES~\citep{yadav2023ties} improves upon task arithmetic by reducing interference between parameters using their signs and magnitudes before merging.~DARE~\citep{yu2024language} randomly removes fine-tuned weights and rescales the existing ones to create sparse tasks vectors, improving generalization of task arithmetic. Other line of work aims to resolve interference by merging within subspaces.~Alternatively, \citet{ortiz2023task} proposes finetuning models in the tangent space, disentangling finetuned models and thus improving their mergeability. Task Singular Vectors~\citep{gargiulo2025task} combines task vectors using low-rank approximation and reduces interference through means of whitening. KnOTS~\citep{stoica2024model} leverages the Singular Value Decomposition (SVD) of the concatenated task vectors to extract the shared information across all tasks and merge only the task-specific components using task arithmetic. Concrete~\citep{tang2023concrete} uses meta learning to find a common low-dimensional subspace and perform merging with reduced interference. 

More recent works explore model merging through the lens of optimization. DOGE~\citep{wei2025modeling} view model merging as a single constrained optimization problem where the objective is aligning the test performance of the merged model with the task-specific models on their respective tasks. MAP~\citep{li2024map} model the merging problem as a multi-objective optimization problem and aim to identify the Pareto front of the merging coefficients using proxy data.~Nevertheless, these works leverage the geometry of the parameter space and employ non-probabilistic merging scheme and thus do not take into account the uncertainty. On the other hand, our $\texttt{PoE-EBM}$ framework explicitly views the merging problem as probabilistic inference in the parameter space and takes uncertainty into account.

\begin{table*}
\centering
\caption{Multi-task performance comparison when merging ViT-L/14 (full fine-tuned) across 7 vision benchmarks (absolute accuracy).~Task-specific fine-tuning accuracy is colored \textcolor{blue}{blue} to highlight performance upper-bound.}
\label{tab:vitl14_results}
\resizebox{\textwidth}{!}{
\begin{tabular}{|l|l l l l l l l l|}
\hline
& \multicolumn{8}{c|}{\textbf{ViT-L/14}} \\
\cline{2-9}
\textbf{Method} 
& \textbf{MNIST}
& \textbf{SVHN}
& \textbf{Cars}
& \textbf{DTD}
& \textbf{GTSRB}
& \textbf{EuroSAT}
& \textbf{RESISC45}
& \textbf{Average} \\
\hline

\textbf{Finetuning} 
& \textcolor{blue}{99.762}
& \textcolor{blue}{97.881}
& \textcolor{blue}{90.113}
& \textcolor{blue}{97.766}
& \textcolor{blue}{99.129}
& \textcolor{blue}{99.852}
& \textcolor{blue}{96.762}
& \textcolor{blue}{97.320} \\
\hline

Weight averaging
& 79.86
& 59.14
& 75.61
& 54.89
& 57.32
& 54.52
& 68.14
& 64.21 \\

Task Arithmetic (TA)
& 92.90
& 70.65
& 77.80
& 60.27
& 66.38
& 68.93
& 74.33
& 73.04 \\

TA-DARE-TIES
& 98.63
& 88.13
& 80.63
& \underline{72.71}
& 83.79
& 87.78
& 83.51
& 85.03 \\

Fisher Merging
& 83.74
& 62.23
& 77.71
& 57.34
& 58.52
& \textbf{99.04}
& 69.19
& 72.54 \\

DOGE-TA
& 98.88
& \textbf{94.45}
& \textbf{87.85}
& 72.23
& \underline{93.97}
& 96.41
& \textbf{91.94}
& 90.82 \\

Concrete-TA
& \underline{98.99}
& 88.47
& 82.74
& 66.54
& 87.13
& 93.89
& \underline{89.39}
& 86.74 \\
\hline

\texttt{PoE-EBM} (Ours)
& \textbf{99.44}
& \underline{94.41}
& \underline{85.53}
& \textbf{83.56}
& \textbf{94.35}
& \underline{97.00}
& 89.22
& \textbf{91.93} \\

\hline
\end{tabular}
}
\end{table*}

\begin{table*} 
\raggedright
\caption{Multi-task performance comparison when merging Flan-T5-large models (LoRA-fine-tuned) across 8 GLUE benchmarks. Task-specific fine-tuning accuracy is colored \textcolor{blue}{blue} which highlights performance upper bound. Our \texttt{PoE-EBM} achieves the best rank across tasks.}
\label{tab: Flan-large}
\begin{tabular}{|l| l l l l l l l l l l|}
\hline
\textbf{Method}
& \textbf{CoLA}
& \textbf{MNLI}
& \textbf{MRPC}
& \textbf{QNLI}
& \textbf{QQP}
& \textbf{RTE}
& \textbf{SST-2}
& \textbf{STS-B}
& \textbf{Average}
& \textbf{Rank} \\

\hline

\textbf{Finetuning}
& \textcolor{blue}{80.20}
& \textcolor{blue}{88.51}
& \textcolor{blue}{89.23}
& \textcolor{blue}{94.40}
& \textcolor{blue}{87.18}
& \textcolor{blue}{91.74}
& \textcolor{blue}{95.19}
& \textcolor{blue}{90.91}
& \textcolor{blue}{89.67}
& \textcolor{blue}{NA} \\
\hline

Weight Averaging
& 74.59
& 84.28
& 84.07
& 92.79
& \textbf{86.28}
& 87.36
& 94.84
& \underline{87.98}
& 86.52
& 4.00 \\

Task Arithmetic
& 76.89
& 85.44
& 85.29
& 93.92
& 85.84
& \underline{88.09}
& \underline{95.18}
& 87.75
& 87.30
& 3.38 \\

TIES-Merging
& 75.55
& 84.69
& 84.31
& \textbf{93.94}
& \underline{86.18}
& \textbf{88.45}
& 95.07
& 87.82
& 86.93 
& 3.13 \\

Concrete-TA
& 76.89
& 86.16
& \textbf{88.54}
& \underline{93.92}
& 85.84
& \underline{88.09}
& \underline{95.18}
& 87.91
& 87.44
& \underline{2.50} \\

DOGE-TA
& \textbf{78.12}
& \underline{88.08}
& 86.52
& 93.80
& 85.82
& 86.72
& 95.00
& 87.71
& \underline{87.72}
& 3.63 \\

\hline

\texttt{PoE-EBM} (Ours)
& \underline{77.28}
& \textbf{88.33}
& \underline{86.76}
& 93.06
& 85.98
& 87.36
& \textbf{95.30}
& \textbf{88.98}
& \textbf{87.88}
& \textbf{2.25} \\

\hline
\end{tabular}
\end{table*}

\begin{table*}
\centering
\caption{Multi-task performance comparison when merging ViT-B/32 (full fine-tuned) across 7 vision benchmarks. The performance is reported in terms of normalized accuracy.}
\label{tab:vitb32_results_normalized}
\resizebox{\textwidth}{!}{
\begin{tabular}{|l| l l l l l l l l|}
\hline
& \multicolumn{8}{c|}{\textbf{ViT-B/32}} \\
\cline{2-9}
\textbf{Method} 
& \textbf{MNIST}
& \textbf{SVHN}
& \textbf{Cars}
& \textbf{DTD}
& \textbf{GTSRB}
& \textbf{EuroSAT}
& \textbf{RESISC45}
& \textbf{Average} \\
\hline

\textbf{Finetuning} 
& \textcolor{blue}{100.00}
& \textcolor{blue}{100.00}
& \textcolor{blue}{100.00}
& \textcolor{blue}{100.00}
& \textcolor{blue}{100.00}
& \textcolor{blue}{100.00}
& \textcolor{blue}{100.00}
& \textcolor{blue}{100.00} \\
\hline

Weight averaging
& 85.28
& 65.71
& 78.71
& 53.20
& 55.89
& 63.59
& 71.56
& 67.70 \\

Task Arithmetic (TA)
& 90.95
& 73.52
& 78.81
& 56.53
& 63.25
& 66.30
& 72.53
& 71.70 \\

TA-DARE-TIES
& 96.29
& 84.16
& 77.81
& 62.60
& 71.48
& 70.64
& 70.75
& 76.25 \\

Fisher Merging
& 90.06
& 84.49
& \underline{90.56}
& 56.04
& 58.50
& 77.28
& 77.57
& 75.91 \\

DOGE-TA
& \underline{98.73}
& \underline{89.80}
& \textbf{92.37}
& \underline{66.10}
& \underline{88.55}
& \underline{90.13}
& \textbf{86.30}
& 87.42 \\

Concrete-TA
& 97.31
& 82.09
& 75.95
& 60.20
& 71.92
& 76.39
& 75.01
& 77.08 \\
\hline

\texttt{PoE-EBM} (Ours)
& \textbf{99.20}
& \textbf{94.03}
& 86.34
& \textbf{78.19}
& \textbf{89.60}
& \textbf{90.24}
& \underline{81.64}
& \textbf{88.46} \\

\hline
\end{tabular}
}
\end{table*}

\begin{table*}[t]
\centering
\caption{Multi-task performance comparison when merging ViT-L/14 (full fine-tuned) across 7 vision benchmarks.~The performance is reported in terms of normalized accuracy.}
\label{tab:vitl14_results_normalized}
\resizebox{\textwidth}{!}{
\begin{tabular}{|l| l l l l l l l l|}
\hline
& \multicolumn{8}{c|}{\textbf{ViT-B/32}} \\
\cline{2-9}
\textbf{Method} 
& \textbf{MNIST}
& \textbf{SVHN}
& \textbf{Cars}
& \textbf{DTD}
& \textbf{GTSRB}
& \textbf{EuroSAT}
& \textbf{RESISC45}
& \textbf{Average} \\
\hline

\textbf{Finetuning} 
& \textcolor{blue}{100.00}
& \textcolor{blue}{100.00}
& \textcolor{blue}{100.00}
& \textcolor{blue}{100.00}
& \textcolor{blue}{100.00}
& \textcolor{blue}{100.00}
& \textcolor{blue}{100.00}
& \textcolor{blue}{100.00} \\
\hline

Weight averaging
& 80.05
& 60.42
& 83.91
& 56.15
& 57.83
& 54.60
& 70.42
& 66.20 \\

Task Arithmetic (TA)
& 93.12
& 72.18
& 86.34
& 61.64
& 66.96
& 69.03
& 76.82
& 75.16 \\

TA-DARE-TIES
& 98.86
& 90.04
& 89.48
& \underline{74.37}
& 84.43
& 87.91
& 86.30
& 87.36 \\

Fisher Merging
& 83.94
& 63.58
& 86.24
& 58.65
& 59.03
& \textbf{99.18}
& 71.51
& 74.89 \\

DOGE-TA
& 99.12
& \textbf{96.49}
& \textbf{97.49}
& 73.88
& \underline{94.80}
& 96.55
& \textbf{95.01}
& 93.34 \\

Concrete-TA
& \underline{99.23}
& 90.38
& 91.82
& 68.06
& 87.90
& 94.03
& \underline{92.38}
& 89.12 \\
\hline

\texttt{PoE-EBM} (Ours)
& \textbf{99.68}
& \underline{96.45}
& \underline{94.91}
& \textbf{85.47}
& \textbf{95.18}
& \underline{97.14}
& 92.21
& \textbf{94.43} \\

\hline
\end{tabular}
}
\end{table*}

\begin{table*}[t]
\centering
\caption{Multi-task performance comparison when merging ViT-L/14 (LoRA-fine-tuned) across 7 vision benchmarks.}
\label{tab:vitl14lora_results}
\resizebox{\textwidth}{!}{
\begin{tabular}{|l| l| l l l l l l l |l|}
\hline
& & \multicolumn{8}{c|}{\textbf{ViT-L/14}} \\
\cline{3-10}
\textbf{Method} & \textbf{Accuracy Type}
& \textbf{MNIST}
& \textbf{SVHN}
& \textbf{Cars}
& \textbf{DTD}
& \textbf{GTSRB}
& \textbf{EuroSAT}
& \textbf{RESISC45}
& \textbf{Average} \\
\hline

\multicolumn{2}{|l|}{\textbf{Finetuning}} 
& \textcolor{blue}{99.53}
& \textcolor{blue}{97.72}
& \textcolor{blue}{99.77}
& \textcolor{blue}{70.05}
& \textcolor{blue}{97.20}
& \textcolor{blue}{98.59}
& \textcolor{blue}{95.70}
& \textcolor{blue}{94.08} \\
\hline

\multirow{2}{*}{Weight averaging}
& Absolute
& 78.55
& 60.19
& 78.22
& 55.80
& 52.19
& \textbf{62.74}
& 72.40
& 65.73 \\
& Normalized
& 78.93
& 61.59
& 78.40
& 79.65
& 53.69
& \textbf{63.64}
& 75.65
& 70.22 \\
\hline

\multirow{2}{*}{Task Arithmetic (TA)}
& Absolute
& 77.93
& 59.70
& 78.14
& 55.69
& 51.85
& \underline{62.70}
& 72.06
& 65.44 \\
& Normalized
& 78.30
& 61.09
& 78.33
& 79.50
& 53.35
& \underline{63.60}
& 75.30
& 69.62 \\
\hline

\multirow{2}{*}{TA-DARE-TIES}
& Absolute
& 70.86
& 71.48
& 81.86
& 57.45
& 60.38
& 57.22
& 79.13
& 68.30 \\
& Normalized
& 71.20
& 73.14
& 81.75
& 82.01
& 62.12
& 58.04
& 82.68
& 72.99 \\
\hline

\multirow{2}{*}{KnOTS-TIES}
& Absolute
& 81.71
& \underline{75.38}
& \textbf{83.83}
& \textbf{58.30}
& \textbf{68.66}
& 61.74
& 79.79
& 72.77 \\
& Normalized
& 82.10
& \underline{77.14}
& \textbf{84.03}
& \textbf{83.22}
& \textbf{70.63}
& 62.62
& 83.38
& 77.59 \\
\hline

\multirow{2}{*}{KnOTS-DARE-TIES}
& Absolute
& 67.43
& 67.06
& 82.68
& 58.19
& 64.13
& 59.67
& \underline{80.00}
& 68.45 \\
& Normalized
& 67.75
& 68.62
& 82.88
& 83.07
& 65.98
& 60.52
& \underline{83.60}
& 73.20 \\
\hline
\hline

\multirow{2}{*}{\texttt{PoE-EBM} (Ours)}
& Absolute
& \textbf{89.56}
& \textbf{75.56}
& \underline{83.37}
& 56.97
& \underline{66.40}
& 60.15
& \textbf{80.37}
& \textbf{73.20} \\
& Normalized
& \textbf{89.99}
& \textbf{77.32}
& \underline{83.56}
& 81.32
& \underline{68.31}
& 61.00
& \textbf{83.98}
& \textbf{77.93} \\

\hline
\end{tabular}
}
\end{table*}

\end{document}